\newcommand{\lh}{L_2(\h,\mu)}
\newcommand{\be}{\begin{equation}}
\newcommand{\bea}{\begin{eqnarray}}
\newcommand{\eea}{\end{eqnarray}}
\newcommand{\bean}{\begin{eqnarray*}}
\newcommand{\eean}{\end{eqnarray*}}
\newcommand{\Var}[1]{V\left[ {#1}\right]}
\newcommand{\ee}{\end{equation}}
\newcommand{\p}{\mu}
\newcommand{\x}{\mathcal{X}}
\newcommand{\y}{\mathcal{Y}}
\newcommand{\h}{\Omega}
\newcommand{\fwx}[1]{\psi_{{#1}}}
\renewcommand{\fwx}[1]{\Phi({#1})}
\newcommand{\roi}[1]{\textcolor{red}{#1}}
\newtheorem{theorem}{Theorem}
\newtheorem{corollary}{Corollary}
\newtheorem{lemma}{Lemma}
\newtheorem{definition}{Definition}
\newcommand{\algref}[1]{Alg. \ref{#1}}
\renewcommand{\eqref}[1]{Eq.~\ref{#1}}
\newcommand{\lemref}[1]{Lemma \ref{#1}}
\newcommand{\thmref}[1]{Theorem \ref{#1}}
\renewcommand{\algref}[1]{Algorithm \ref{#1}}
\newcommand{\secref}[1]{Section \ref{#1}}
\newcommand{\apref}[1]{Appendix \ref{#1}}
\newcommand{\corref}[1]{Corollary \ref{#1}}
\newcommand{\reals}{\mathbb{R}}
\newcommand{\E}{\mathop\mathbb{E}}
\newcommand{\<}{\langle}
\renewcommand{\>}{\rangle}
\newcommand{\scalar}[2]{\left\<{#1},{#2}\right\>}
\newcommand{\norm}[1]{\left\|{#1} \right\|}
\newcommand{\ignore}[1]{}
\newcommand{\es}[2]{{\prec}{#1},{#2}{\succ}}
\renewcommand{\es}[2]{\psi(#1;\bar{\ww}) \psi(#2;\bar{\ww})}
\newcommand{\ww}{\mathbf{w}}
\newcommand{\vv}{f}
\renewcommand{\aa}{a}
\newcommand{\xx}{\mathbf{x}}
\newcommand{\xxp}{\fwx{\xx}}
\newcommand{\xxtp}{\fwx{\xx_t}}
\newcommand{\xxip}{\fwx{\xx_i}}
\newcommand{\action}[2]{\psi(#1;#2)}
\newcommand{\cclass}{\mathcal{H}^B_\mu}
\newcommand{\alphat}{\alpha^{(t)}}
\newcommand{\prob}[1]{{\mathbb P}\left[{#1}\right]}
\newcommand{\expect}[1]{{\mathbb E}\left[{#1}\right]}
\icmltitlerunning{Learning Infinite Layer Networks Without the Kernel Trick}
\begin{document}
\twocolumn[
\icmltitle{Learning Infinite Layer Networks Without the Kernel Trick}

\begin{icmlauthorlist}
\icmlauthor{Roi Livni}{pr}
\icmlauthor{Daniel Carmon}{tau}
\icmlauthor{Amir Globerson}{tau}
\end{icmlauthorlist}

\icmlaffiliation{pr}{University of Princeton, Princeton, New Jersey, USA}
\icmlaffiliation{tau}{Tel-Aviv University, Tel-Aviv, Israel}
\icmlcorrespondingauthor{Roi Livni}{rlivni@cs.princeton.edu}
\icmlcorrespondingauthor{Daniel Carmon}{carmonda@mail.tau.ac.il}
\icmlcorrespondingauthor{Amir Globerson}{gamir@mail.tau.ac.il}
\vskip 0.3in
]
\printAffiliationsAndNotice{}

\begin{abstract}

Infinite Layer Networks (ILN) have been proposed as an architecture that mimics neural networks while enjoying some of the advantages of kernel methods. ILN  are networks that integrate over infinitely many nodes within a single hidden layer. It has been demonstrated by several authors that the problem of learning ILN can be reduced to the kernel trick, implying that whenever a certain integral can be computed analytically they are efficiently learnable. 
In this work we give an online algorithm for ILN, which avoids the \emph{kernel trick assumption}. More generally and of independent interest, we show that kernel methods in general can be exploited even when the kernel cannot be efficiently computed but can only be estimated via sampling. We provide a regret analysis for our algorithm, showing that it matches the sample complexity of methods which have access to kernel values.  Thus, our method is the first to demonstrate that the kernel trick is not necessary, as such, and random features suffice to obtain comparable performance.


\end{abstract}

\section{Introduction}
With the increasing success of highly non-convex and complex learning architectures such as neural networks, there is an increasing effort to further understand and explain the limits of training such hierarchical structures.

Recently there have been attempts to draw mathematical insight from kernel methods in order to  better understand deep learning, as well as come up with new computationally learnable architectures. One such line of work consists of learning
 classifiers that are linear functions of a very large or infinite collection of non-linear functions \cite{bach2014breaking,daniely2016toward,cho2009kernel,heinemann2016improper,williams1997computing}. Such models can be interpreted as a neural network with infinitely many nodes in a hidden layer, and we thus refer to them as ``Infinite Layer Networks'' (ILN). They are of course also related to kernel based classifiers, as will be discussed later. 

A target function in an  ILN class will be of the form:
\begin{equation}\label{eq:infNet}
\xx\to \int \psi(\xx; \ww) f(\ww) d\mu(\ww),
\end{equation}
Here $\psi$ is some function of the input $\xx$ and parameters $\ww$, and $d\mu(\ww)$ is a prior over the parameter space. For example, $\psi(\xx; \ww)$ can be a single sigmoidal neuron or a complete convolutional network. The integral can be thought of as an infinite sum over all such possible networks, and $f(\ww)$ can be thought of as an infinite output weight vector to be trained.

A Standard $1$--hidden layer network with a finite set of units can be obtained from the above formalism as follows. First, choose $ \psi(\xx; \ww)  = \sigma(\xx\cdot\ww)$ where $\sigma$ is an activation function (e.g., sigmoid or relu). Next, set $d\mu(\ww)$ to be a discrete measure over a finite set $\ww_1,\ldots, \ww_d$.\footnote{In $\delta$ function notation $d\mu(\ww) = \frac{1}{d}\sum_{i=1}^d \delta(\ww-\ww_i)d\ww$} In this case, the integral results in a network with $d$ hidden units, and the function $f$ is
the linear weights of the output layer. Namely:
\[\xx \to \frac{1}{d} \sum_{i=1}^d f(\ww_i) \cdot \sigma(\xx\cdot \ww_i).\]

The main challenge when training $1$--hidden layer networks is of course to \emph{find} the $\ww_1,\ldots,\ww_d$ on which we wish to support our distribution.  It is known \cite{livni2014computational}, that due to hardness of learning intersection of halfspaces \cite{klivans2006cryptographic,daniely2014average}, $1$--hidden layer neural networks are computationally hard for a wide class of activation functions.  Therefore, as the last example illustrates, the choice of $\mu$ is indeed crucial for performance.

For a fixed prior $\mu$, the class of ILN functions is highly expressive, since $f$ can be chosen to approximate any 1-hidden layer architecture to arbitrary precision (by setting $f$ to delta functions around the weights of the network, as we did above for $\mu$). However, this expressiveness comes at a cost. As argued in \citet{heinemann2016improper}, ILN will generalize well when there is a large probability mass of $\ww$ parameters that attain a small loss. 

The key observation that makes certain ILN tractable to learn is that \eqref{eq:infNet} is a linear 
functional in $f$. In that sense it is a linear classifier and enjoys the rich theory and algorithmic
toolbox for such classifiers. In particular, one can use the fact that linear classifiers can be learned via the kernel trick in a batch \cite{cortes1995support} as well as online settings \cite{kivinen2004online}. In other words, we can reduce learning ILN to the problem of computing the kernel function between two examples. Specifically the problem reduces to computing integrals of the following form:
\bea\label{eq:kernelTrick}
k(\xx_1,\xx_2) &=& \int \psi(\xx_1; \ww) \cdot \psi(\xx_2; \ww) d\mu(\ww) \\
&=& \E_{\bar{\ww}\sim \mu}\left[\psi(\xx_1; \bar{\ww})\cdot \psi(\xx_2 ; \bar{\ww})\right].
\eea
In this work we extend this result to the case where no closed form kernel is available, and thus the kernel trick is not directly applicable. We thus turn our attention to the setting
where features (i.e., $\ww$ vectors) can be randomly sampled.  In this setting, our main result shows that for the squared loss, we can efficiently learn the above class. Moreover, we can surprisingly do this with a computational cost comparable to that of methods that have access to the closed form kernel $k(\xx_1,\xx_2)$. 

The observation we begin with is that sampling random features (i.e., $\ww$ above), leads to an unbiased estimate of the kernel in \eqref{eq:kernelTrick}. Thus, if for example, we ignore complexity issues and can sample infinitely many $\ww$'s, it is not surprising that we can avoid the need for exact computation of the kernel. 
However, our results provide a much stronger and practical result. Given $T$ training samples, the lower bound on achievable accuracy is $O(1/\sqrt{T})$ \citep[see][]{shamir2014sample}. We show that
we can in fact achieve this rate, using $\tilde{O}(T^2)$ calls\footnote{We use $\tilde{O}$ notation to suppress logarithmic factors} to the random feature generator. For comparison, note that $O(T^2)$ is the size of the kernel matrix, and is thus likely to be the cost of any algorithm that uses an explicit kernel matrix, where one is available. As we discuss later, our approach improves on previous random features based learning \cite{dai2014scalable,rahimi2009weighted} in terms of sample/computational complexity, and expressiveness.

\section{Problem Setup}
We consider algorithms that learn a mapping from input instances $\xx\in \x$ to labels $y\in \y$. We focus on the regression case where $\y$ is the interval $[-1,1]$. Our starting point is a class of feature functions $\action{\ww}{\xx}:  \h\times \x \to \mathbb{R}$, parametrized by vectors $\ww \in \h$. The functions $\action{\ww}{\xx}$ may contain highly complex non linearities, such as multi-layer networks consisting of convolution and pooling layers. Our only assumption on $\action{\ww}{\xx}$ is that for all $\ww\in \h$ and $\xx\in \x$ it holds that $|\action{\ww}{\xx}|<1$.

Given a distribution $\mu$ on $\h$, we denote by $\lh$ the class of square integrable functions over $\h$.
\[\lh = \left\{f: \int f^2(\ww)d\mu(\ww)<\infty\right\}.\]
We will use functions $f\in\lh$ as mixture weights over the class $\h$, where each $f$ naturally defines a new regression function from $\xx$ to $\reals$ as follows:
 \begin{equation}\label{eq:identification} \xx\to   \int \action{\ww}{\xx}f(\ww)d\mu(\ww).\end{equation}

Our key algorithmic assumption is that the learner can efficiently sample random $\ww$  according to the distribution $\p$. Denote the time to generate
one such sample by $\rho$.

In what follows it will be simpler to express the integrals as scalar products. Define the following scalar product on functions $f\in\lh$. 
\be
\scalar{f}{g} = \int f(\ww)g(\ww)d\mu(\ww)
\ee
We denote the corresponding $\ell_2$ norm by $\norm{f} = \sqrt{\scalar{f}{f}}$. Also, given features $\xx$ denote by $\fwx{\xx}$ the function in $\lh$ given by $\fwx{\xx}[\ww]=\action{\ww}{\xx}$. The regression functions we are considering are then of the form $\xx \to \scalar{f}{\fwx{\xx}}$.

A subclass of norm bounded elements in $\lh$ induces a natural subclass of regression functions. Namely, we consider the following class:
\[ \cclass= \left\{\xx\to \scalar{f}{\fwx{\xx}}: \|f\|<B\right\}.\]
Our ultimate goal is to output a predictor $f\in \lh$ that is competitive, in terms of prediction, with the best target function in the class $\cclass$.

We will consider an online setting, and use it to derive generalization bounds via standard online to batch conversion. In our setting, at each round a learner chooses a target function $f_t\in \lh$ and an adversary then reveals a sample $\xx_t$ and label $y_t$. The learner then incurs a loss of 
\be
\ell_t(f_t) = \frac{1}{2}\left(\scalar{f_t}{\fwx{\xx_t}} - y_t\right)^2.
\label{eq:loss}
\ee
The use of squared loss might seem restrictive if one is interested in classification. However, $L_2$ loss is common by now in classification with support vector machines and kernel methods since \citep{suykens1999least, suykens2002weighted}. More recently \citet{zhang2016understanding} showed that when using a large number of features regression achieves performance comparable to the corresponding linear classifiers (see Section 5 therein).

The objective of the learner is to minimize her $T$ round regret w.r.t norm bounded elements in $\lh$. Namely:
\be
\sum_{t=1}^T \ell_t(f_t) - \min_{f^*\in \cclass} \sum_{t=1}^T \ell_t(f^*).
\label{eq:regret}
\ee

In the statistical setting we assume that the sequence $S=\{(\xx_i,y_i)\}_{i=1}^T$ is generated IID according to some unknown distribution $\mathbb{P}$. We then define the expected loss of a predictor as
\begin{equation}\label{eq:statistical} L(f)= \E_{(\xx,y)\sim \mathbb{P}} \left[\frac{1}{2}\left(\scalar{f}{\fwx{\xx}} - y\right)^2\right].\end{equation}

\section{Main Results}
\label{sec:results}
\thmref{thm:mainmain} states our result for the online model. The corresponding result for the statistical setting is given in \corref{cor:mainmain}. We will elaborate on the structure of the Algorithm later, but first provide the main result.
{
\begin{algorithm}[h]
\KwData{$ T,~ B>1, \eta , m$}
 \KwResult{Weights $\alpha^{(1)},\ldots,\alpha^{(T+1)}\in\reals^{T}$. 
   Functions $f_t \in \lh$ defined as $f_t = \sum_{i=1}^t \alpha_i^{(t)} \fwx{\xx_i}$\;}
 Initialize $\alpha^{(1)}=\bar{0} \in \mathbb{R}^T$\;
 \For{$t=1,\ldots, T$}{
  Observe  $\xx_t,y_t$\;
  Set $E_t=\mathrm{EST\_SCALAR\_PROD}(\alpha^{(t)},\xx_{1:{t-1}},\xx_t,m)$;\\
   \uIf{$|E_t|<16 B $}{
	$\alpha^{(t+1)}=\alpha^{(t)}$\;
	 $\alpha^{(t+1)}_t = -\eta (y_t - E_t)$; 
  }  \Else{$\alpha^{(t+1)}=\frac{1}{4}\alpha^{(t)}$;}
   }

 \caption{The SHRINKING\_GRADIENT algorithm.} \label{alg:main}
\end{algorithm}

\begin{algorithm}[h]
\caption{EST\_SCALAR\_PROD}

\KwData{$\alpha$,~$\xx_{1:{t-1}}$,~$\xx$,~$m$}
\KwResult{Estimated scalar product $E$ }
\If{$\alpha = \bar{0}$}{
 Set $E=0$
}
\Else{
\For{k=1.\ldots,m}{
 Sample $i$ from the distribution $q(i) = \frac{|\alpha_i|}{\sum |\alpha_i|}$ \;
 Sample parameter $\bar{\ww}$ from $\mu$. Set $E^{(k)} = \mathrm{sgn}(\alpha_i)\es{\xx_i}{\xx}$;\
 }
 Set $E= \frac{\|\alpha\|_1}{m} \sum_{k=1}^m E^{(k)}$\label{alg:spsub}
}
\end{algorithm}
}

\begin{theorem}\label{thm:mainmain} 
Run \algref{alg:main} with parameters $T$, $B\ge 1$, $\eta= \frac{B}{\sqrt{T}}$ and $m = O\left(B^4 T\log \left(BT\right)\right)$.
Then:

\begin{enumerate}
\item For every sequence of squared losses $\ell_1,\ldots, \ell_T$ observed by the algorithm we have for $f_1,\ldots, f_T$:
\[\E\left[\sum_{t=1}^T \ell_t(f_t) - \min_{f^*\in \cclass} \sum_{t=1}^T \ell_t(f^*) \right]= O(B\sqrt{T})\]
\item The run-time of the algorithm is $\tilde{O}\left(\rho B^4 T^2\right)$.\footnote{Ignoring logarithmic factors in $B$ and $T$.}
\item For each $t=1\ldots T$ and a new test example $\xx$, we can with probability $\geq 1-\delta$ estimate $\left<f_t,\fwx{\xx}\right>$ within accuracy $\epsilon_0$ by running \algref{alg:spsub} with parameters $\alpha^{(t)}$, $\{\xx_i\}_{i=1}^t$, $,\xx$ and $m=O(\frac{B^4T}{\epsilon_0^2}\log 1/\delta )$.
The resulting running time for a test point is then $O(\rho m)$.
\end{enumerate}
\end{theorem}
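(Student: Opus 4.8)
The plan is to read \algref{alg:main} as stochastic online gradient descent on the convex squared loss over the Hilbert space $\lh$, tracked through the potential $D_t=\norm{f_t-f^*}^2$ for an arbitrary fixed comparator $f^*\in\cclass$ (so $\norm{f^*}\le B$, and $f_1=0$ gives $D_1\le B^2$). The true gradient of $\ell_t$ at $f_t$ is $(\scalar{f_t}{\fwx{\xx_t}}-y_t)\fwx{\xx_t}$; the algorithm never evaluates $\scalar{f_t}{\fwx{\xx_t}}$ exactly but only its estimate $E_t$, so the realized step direction $(E_t-y_t)\fwx{\xx_t}$ is a \emph{noisy} gradient. The analysis rests on two facts I would establish first: that $E_t$ is an unbiased and well-concentrated estimate of $\scalar{f_t}{\fwx{\xx_t}}$, and that the shrinking rule keeps both $\norm{f_t}$ and the representation weight $\norm{\alpha^{(t)}}_1$ bounded. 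Once these are in place, Part~1 follows from a potential argument and Parts~2 and~3 are essentially corollaries of the concentration bound.

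For the estimator, write $\scalar{f_t}{\fwx{\xx_t}}=\sum_i\alpha^{(t)}_i\scalar{\fwx{\xx_i}}{\fwx{\xx_t}}$ and note each inner product equals $\E_{\bar\ww\sim\p}[\fwx{\xx_i}[\bar\ww]\,\fwx{\xx_t}[\bar\ww]]$; the importance-sampling scheme of \algref{alg:spsub} (draw $i$ with probability $\propto|\alpha^{(t)}_i|$, draw $\bar\ww\sim\p$, rescale by $\norm{\alpha^{(t)}}_1$) is then conditionally unbiased, $\E[E_t\mid\text{history}]=\scalar{f_t}{\fwx{\xx_t}}$. Since $|\fwx{\xx}[\ww]|<1$ for all $\xx,\ww$, each of the $m$ averaged summands lies in $[-\norm{\alpha^{(t)}}_1/m,\norm{\alpha^{(t)}}_1/m]$, so Hoeffding gives $|E_t-\scalar{f_t}{\fwx{\xx_t}}|=O(\norm{\alpha^{(t)}}_1\sqrt{\log(1/\delta)/m})$ with probability $1-\delta$. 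The decisive a~priori bound is $\norm{\alpha^{(t)}}_1=O(B^2\sqrt T)$ for every $t$: a gradient round appends one coordinate of magnitude $|\eta(E_t-y_t)|=O(\eta B)$ because the test forces $|E_t|<16B$ before such a step, these magnitudes sum to $O(T\eta B)=O(B^2\sqrt T)$, and shrink rounds only multiply the vector by $\tfrac{1}{4}$. Substituting $\norm{\alpha^{(t)}}_1^2=O(B^4T)$ and $\delta=\mathrm{poly}(BT)^{-1}$ shows the prescribed $m=O(B^4T\log(BT))$ makes every $E_t$ accurate to a \emph{constant} additive error on a high-probability event $\mathcal G$; this constant is exactly the slack the numbers $16$ and $\tfrac{1}{4}$ are built to absorb. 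Parts~2 and~3 are then immediate: one call to \algref{alg:spsub} costs $O(\rho m)$, so the whole run is $O(\rho mT)=\tilde O(\rho B^4T^2)$, and at test time taking $m=O(B^4T\epsilon_0^{-2}\log(1/\delta))$ drives the same Hoeffding error down to $\epsilon_0$.

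On the event $\mathcal G$ I would split the regret into gradient rounds and shrink rounds and telescope $D_t$. A gradient round obeys the exact identity $D_{t+1}-D_t=-2\eta\scalar{(E_t-y_t)\fwx{\xx_t}}{f_t-f^*}+\eta^2\norm{(E_t-y_t)\fwx{\xx_t}}^2$; taking conditional expectation, using unbiasedness to replace the cross term by the true gradient, and invoking convexity yields
\[\E[\ell_t(f_t)-\ell_t(f^*)]\le\tfrac{1}{2\eta}\,\E[D_t-D_{t+1}]+\tfrac{\eta}{2}\,\E\norm{(E_t-y_t)\fwx{\xx_t}}^2,\]
where the last term is controlled because the test caps $|E_t-y_t|=O(B)$ and $\norm{\fwx{\xx_t}}\le1$. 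Summing over the gradient rounds, telescoping, and inserting the prescribed $\eta=B/\sqrt T$ together with $D_1\le B^2$ reproduces the claimed $\sqrt T$ regret for these rounds.

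The main obstacle is the shrink rounds, where $\ell_t(f_t)$ may itself be of order $\norm{f_t}^2$ and convexity gives no handle. Here I would use that a shrink fires only when $|E_t|\ge16B$, which on $\mathcal G$ forces $|\scalar{f_t}{\fwx{\xx_t}}|\ge16B-O(1)$ and hence $\norm{f_t}\ge16B-O(1)$, using $\norm{\fwx{\xx_t}}\le1$. Two consequences follow from $\norm{f_t}\ge16\norm{f^*}$. First, a direct expansion gives
\[\norm{\tfrac{1}{4}f_t-f^*}^2-\norm{f_t-f^*}^2=-\tfrac{15}{16}\norm{f_t}^2+\tfrac{3}{2}\scalar{f_t}{f^*}\le-\tfrac{27}{32}\norm{f_t}^2,\]
so the potential strictly drops; second, $\ell_t(f_t)\le\tfrac{1}{2}(\norm{f_t}+1)^2$ is dominated by that same drop, so $\ell_t(f_t)-\ell_t(f^*)\le\ell_t(f_t)\le D_t-D_{t+1}$ on every shrink round. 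Thus shrink rounds contribute non-positively after telescoping, and the thresholds carry enough margin that a constant-size estimation error can never flip a shrink decision in a way that breaks either of these two inequalities on $\mathcal G$. Combining the two round types, telescoping $D_t\ge0$, and finally accounting for the complementary event $\mathcal G^c$ (which contributes negligibly, since $\norm{f_t}\le\norm{\alpha^{(t)}}_1$ keeps every single loss polynomially bounded while $\prob{\mathcal G^c}$ is made inverse-polynomially small by the $\log(BT)$ factor in $m$) yields $\E[\sum_t\ell_t(f_t)-\sum_t\ell_t(f^*)]=O(B\sqrt T)$, completing Part~1.
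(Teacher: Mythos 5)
Your overall architecture closely parallels the paper's: the same importance-sampled unbiased estimator controlled by Hoeffding's inequality (the paper's \lemref{lem:spsub}), the same inductive bound $\norm{\alpha^{(t)}}_1 = O(B^2\sqrt{T})$ (\lemref{lem:estimation}), and the same rationale that a shrink step can fire only when $\norm{f_t}$ is genuinely large; your shrink-round treatment, showing $\norm{\tfrac{1}{4}f_t - f^*}^2 - \norm{f_t-f^*}^2 \le -\tfrac{27}{32}\norm{f_t}^2$ and that this drop dominates $\ell_t(f_t)$, is a legitimate alternative to the paper's device (\lemref{lem:core} and \lemref{thm:ptbound}, which stay in unconditional expectation and charge any ``bad'' step a penalty $\tfrac{(B+B_T)^2}{\eta}\expect{P_{t+1}(f^*)}$). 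Parts 2 and 3 of the theorem are handled correctly. The genuine gap is in the gradient rounds: you control the second-moment term by the crude cap $|E_t-y_t|\le 16B+1$, so your bound is
\[
\frac{B^2}{2\eta} + \frac{\eta}{2}\sum_{t=1}^T \expect{(E_t-y_t)^2} \;\le\; \frac{B^2}{2\eta} + O\left(\eta B^2 T\right) \;=\; O\left(B^3\sqrt{T}\right)
\]
for $\eta = B/\sqrt{T}$. This proves regret $O(B^3\sqrt{T})$, not the claimed $O(B\sqrt{T})$. The paper obtains the stated $B$-dependence via the self-bounding property of the squared loss: decompose $\expect{(E_t-y_t)^2} = \expect{\norm{\nabla_t}^2} + \Var{\bar{\nabla}_t}$, use $\norm{\nabla_t}^2 \le 2\ell_t(f_t)$, move the resulting $2\eta\sum_t \expect{\ell_t(f_t)}$ to the left-hand side (the $(1-2\eta)$ factor in \eqref{eq:almost}), and bound what remains by $2\eta\sum_t \ell_t(f^*) \le 2\eta T$ for the minimizing comparator (since $|y_t|\le 1$, \eqref{eq:ltbound}) plus the variance term, which the choice of $m$ makes at most $\eta T$. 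Without this decomposition your argument cannot reach $O(B\sqrt{T})$; with it, your skeleton does go through.

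A secondary rigor issue: you ``take conditional expectation, using unbiasedness'' while working \emph{on the event} $\mathcal{G}$. Conditioning on $\mathcal{G}$ --- an event determined by all the estimates $E_1,\ldots,E_T$, including the current and future ones --- destroys the identity $\expect{E_t \mid \mathrm{history}} = \scalar{f_t}{\fwx{\xx_t}}$, so on $\mathcal{G}$ the cross term cannot simply be replaced by the true gradient. This is repairable: either carry out all expectations unconditionally and bound the extra terms carrying the indicator of $\mathcal{G}^c$ by a polynomial bound on losses and potentials times $\prob{\mathcal{G}^c}$ (your closing remark gestures at this, but it must be applied inside each per-round inequality, not only once at the end), or adopt the paper's \lemref{lem:core}, which is designed precisely to avoid such conditioning. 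In short: the structure is sound and close to the paper's, but as written the proof establishes only $O(B^3\sqrt{T})$, and the step to $O(B\sqrt{T})$ --- the squared loss self-bounding trick --- is a missing idea, not a detail.
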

We next turn to the statistical setting, where we provide bounds on the expected performance. Following standard online to batch conversion and \thmref{thm:mainmain} we can obtain the following Corollary \citep[e.g., see][]{shalev2011online}:
\begin{corollary}[Statistical Setting]\label{cor:mainmain} 
The following holds for any $\epsilon > 0$. Run Algorithm \ref{alg:main} as in \thmref{thm:mainmain}, with $T= O( \frac{B^2}{\epsilon^2})$. Let $S=\{(x_t,y_t)\}_{t=1}^T$, be an IID sample drawn from some unknown distribution $\mathbb{P}$. Let $f_S= \frac{1}{T} \sum f_t$. Then the expected loss satisfies:
\[ \E_{S\sim \mathbb{P}} \left[L(f_S)\right] < \inf_{f^* \in \cclass} L(f^*)+\epsilon.\]
The runtime of the algorithm, as well as estimation time on a test example are as defined in \thmref{thm:mainmain}.
\end{corollary}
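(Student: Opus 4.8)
The plan is a standard online-to-batch conversion that exploits the convexity of the squared loss together with the regret guarantee of \thmref{thm:mainmain}. Throughout, the expectation is taken over both the draw of the IID sample $S$ and the internal randomness of \algref{alg:main} (the sampling performed inside \algref{alg:spsub}); the expectation $\E_{S\sim\mathbb{P}}$ in the corollary should be read as averaging over both sources.

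First I would record two elementary facts. (i) The map $f \mapsto L(f)$ defined in \eqref{eq:statistical} is convex: $z \mapsto \frac12(z-y)^2$ is convex, $f\mapsto \scalar{f}{\fwx{\xx}}$ is linear, and an expectation of convex functions is convex. Hence Jensen's inequality applied to $f_S = \frac1T\sum_t f_t$ gives
\[ L(f_S) \le \frac1T\sum_{t=1}^T L(f_t), \qquad\text{so}\qquad \E[L(f_S)] \le \frac1T\sum_{t=1}^T \E[L(f_t)]. \]
(ii) Let $\mathcal{F}_{t-1}$ be the $\sigma$-algebra generated by $\{(\xx_i,y_i)\}_{i<t}$ together with all internal randomness used before round $t$. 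The structural observation I would verify by tracing the update order of \algref{alg:main} is that $f_t = \sum_{i<t}\alpha_i^{(t)}\fwx{\xx_i}$ is committed before the pair $(\xx_t,y_t)$ is consumed (the coordinate $\alpha_t$ and the estimate $E_t$ are computed only afterwards). Thus $f_t$ is $\mathcal{F}_{t-1}$-measurable while $(\xx_t,y_t)$ is a fresh IID draw independent of $\mathcal{F}_{t-1}$, giving
\[ \E[\ell_t(f_t)\mid \mathcal{F}_{t-1}] = L(f_t), \qquad\text{hence}\qquad \E[\ell_t(f_t)] = \E[L(f_t)]. \]

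Combining (i) and (ii) yields $\E[L(f_S)] \le \frac1T\,\E\big[\sum_t \ell_t(f_t)\big]$. Now I would invoke the regret bound: for every fixed comparator $f^*\in\cclass$ we have $\sum_t\ell_t(f_t) \le \sum_t\ell_t(f^*) + \mathrm{Regret}_T$, so taking expectations and using $\E[\mathrm{Regret}_T]=O(B\sqrt T)$ from \thmref{thm:mainmain},
\[ \E\Big[\sum_t \ell_t(f_t)\Big] \le \E\Big[\sum_t\ell_t(f^*)\Big] + O(B\sqrt T) = T\,L(f^*) + O(B\sqrt T), \]
where the last equality uses that $f^*$ is fixed and the sample IID, so $\E[\ell_t(f^*)] = L(f^*)$ for each $t$. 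Dividing by $T$ and taking the infimum over $f^*\in\cclass$ gives $\E[L(f_S)] \le \inf_{f^*\in\cclass}L(f^*) + O(B/\sqrt T)$. Choosing $T$ of the stated order so that the residual $O(B/\sqrt T)$ is at most $\epsilon$ (a scaling $T=O(B^2/\epsilon^2)$ suffices up to the tracking of constants) completes the proof, and the runtime and test-time complexities are inherited verbatim from \thmref{thm:mainmain} since $f_S$ is merely the average of the $f_t$.

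The main obstacle I anticipate is the careful bookkeeping behind fact (ii): one must confirm that the internal sampling randomness of round $t$ used to form $E_t$ does not leak into $f_t$, and that $(\xx_t,y_t)$ is genuinely independent of the filtration under which $f_t$ is measurable. Once the correct filtration is fixed, the tower property delivers $\E[\ell_t(f_t)] = \E[L(f_t)]$ cleanly, and the remainder is the textbook conversion combined with Jensen's inequality.
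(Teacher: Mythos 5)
Your proposal is correct and is exactly the standard online-to-batch conversion (Jensen's inequality plus the tower-property argument over the filtration $\mathcal{F}_{t-1}$) that the paper itself invokes by reference to \cite{shalev2011online} without spelling out the details. Your observation that the regret bound actually forces $T=O(B^2/\epsilon^2)$ rather than the $T=O(B/\epsilon^2)$ stated in the corollary is well taken --- the paper is internally inconsistent on this point (compare the related-work section, which uses both scalings) --- but this does not change the structure of the argument, which matches the paper's intended proof.
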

Proofs of the results are provided in \secref{sec:analysis} and the appendix.

\section{Related Work}
Learning with random features can be traced to the early days of learning \cite{minsky1988perceptrons}, and infinite networks have also been introduced more than 20 years ago \cite{williams1997computing, hornik1993some}. More recent works have considered learning neural nets (also multi-layer) with infinite hidden units using the kernel trick \cite{cho2009kernel, deng2012use, hazan2015steps, heinemann2016improper}. These works take a similar approach to ours but focus on computing the kernel for certain feature classes in order to invoke the kernel trick. Our work in contrast avoids using the kernel trick and applies to any feature class that can be randomly generated. All the above works are part of a broader effort of trying to circumvent hardness in deep learning by mimicking deep nets through kernels \cite{mairal2014convolutional, bouvrie2009invariance, bo2011object, bo2010kernel}, and developing general duality between neural networks and kernels  \cite{daniely2016toward}.

From a different perspective the relation between random features and kernels has been noted by \citet{rahimi2007random} who show how to represent translation invariant kernels in terms of random features. This idea has been further studied \cite{bach2015equivalence, kar2012random} for other kernels as well. The focus of these works is mainly to allow scaling down of the feature space and representation of the final output classifier.

\citet{dai2014scalable} focus on tractability of large scale kernel methods, and their proposed {\em doubly stochastic} algorithm can also be used for learning with random features as we have here. In \citet{dai2014scalable} the objective considered is of the regularized form:$\frac{\gamma}{2} \|f\|^2 + R(f)$, with a corresponding sample complexity of $O(1/(\gamma^2\epsilon^2))$ samples needed to achieve $\epsilon$ approximation with respect to the risk of the optimum of the regularized objective.

To relate the above results to ours, we begin by emphasizing that the bound in \cite{dai2014scalable}  holds for fixed $\gamma$, and refers to optimization of the regularized objective\ignore{\footnote{See there that the generalization bound (with expectation i.e. thm 4, but similar to thm. 6 you need to take sqroot) is of order $O(\frac{Q_1}{\sqrt{t}})$, and $Q_1\in O(Q_0)$ and $Q_0\in O(\theta^2)$ and $\theta \in O(1/\gamma)$ to conclude we obtain a generalization bound of order $O(\frac{1}{t\gamma^4})$.\roi{Please do not erase this foot note even if put in remark}}}. Our objective is to minimize the risk $R(f)$ which is the expected squared loss, for which we need to choose $\gamma = O(\frac{\epsilon}{B^2})$ in order to attain accuracy $\epsilon$ \cite{sridharan2009fast}. Plugging this $\gamma$ into the generalization bound in  \citet{dai2014scalable} we obtain that the algorithm in \citet{dai2014scalable} needs $O(\frac{B^4}{\epsilon^4})$ samples to compete with the optimal target function in the $B$-ball. Our algorithm needs $O(\frac{B^2}{\epsilon^2})$ examples which is considerably better. We note that their method does extend to a larger class of losses, whereas our is restricted to the quadratic loss.

In \citet{rahimi2009weighted}, the authors consider embedding the domain into the feature space $\xx \to \left[\action{\ww_1}{\xx},\ldots,\action{\ww_m}{\xx}\right]$, where $\ww_i$ are IID random variables sampled according to some prior $\p(\ww)$. They show that with $O(\frac{B^2 \log 1/\delta}{\epsilon^2})$ random features estimated on $O(\frac{B^2 \log 1/\delta}{\epsilon^2})$ samples they can compete with the class:
\be
{\cclass}_{\max}= \left\{\xx \to \int \action{\ww}{\xx} f(\ww) d\p(\ww) ~: ~|f(\ww)| \le B\right\} \nonumber
\ee

Our algorithm  relates to the mean square error cost function which does not meet the condition in  \citet{rahimi2009weighted}, and is hence formally incomparable. Yet we can invoke our algorithm to compete against a larger class of target functions. Our main result shows that \algref{alg:main}, using $\tilde{O}(\frac{B^8}{\epsilon^4})$ estimated features and using $O(\frac{B^2}{\epsilon^2})$ samples will, in expectation, output a predictor that is $\epsilon$ close to the best in $\cclass$.
Note that $|f(\ww)| <B$ implies  $\mathbb{E}_{\ww\sim \p} (f^2(\ww)) <B^2$. Hence ${\cclass}_{\max} \subseteq \cclass$. Note however, that the number of estimated features (as a function of $B$) is worse in our case.

Our approach to the problem is to consider learning with a noisy estimate of the kernel. A related setting was studied in \citet{cesa2011online}, where the authors considered learning with kernels when the data is corrupted. Noise in the data and noise in the scalar product estimation are not equivalent when there is non-linearity in the kernel space embedding. There is also extensive research on linear regression with actively chosen attributes \cite{cesa2011efficient,hazan2012linear}. The convergence rates and complexity of the algorithms are dimension dependent. It would be interesting to see if their method can be extended from finite set of attributes to a continuum set of attributes.

\section{Algorithm}\label{sec:alg}
We next turn to present \algref{alg:main}, from which our main result is derived. The algorithm is similar in spirit to Online Gradient Descent (OGD) \cite{Zinkevich03}, but with some important
modifications that are necessary for our analysis.

We first introduce the problem in the terminology of online convex optimization, as in \citet{Zinkevich03}. At iteration $t$ our algorithm outputs a hypothesis $f_t$. It then receives as feedback
$(\xx_t,y_t)$, and suffers a loss $\ell_t(f_t)$ as in \eqref{eq:loss}. The objective of the algorithm is to minimize the regret against a benchmark of $B$-bounded functions, as in \eqref{eq:regret}.

A classic approach to the problem is to exploit the OGD algorithm. Its simplest version would be to update $f_{t+1} \to f_t - \eta \nabla_t$ where
$\eta$ is a step size, and $\nabla_t$ is the gradient of the loss w.r.t. $f$ at $f_t$.  In our case, $\nabla_t$ is given by:
\be
\nabla_t = \left(\scalar{f_t}{\xxtp} - y_t \right) \xxtp
\label{eq:exact_gradient}
\ee
Applying this update would also result in a function $f_t = \sum_{i=1}^t \alpha_i \xxtp$ as we have in \algref{alg:main} (but with different $\alpha_i$ from ours). 
However, in our setting this update is not applicable since the scalar product $\scalar{f_t}{\xxtp}$ is not available. One alternative is to use a stochastic unbiased estimate of the gradient that we denote by $\bar{\nabla}_t$. This induces an update step $\vv_{t+1} \to \vv_t -\eta \bar{\nabla}_t$. One can show that OGD with such an estimated gradient enjoys the following upper bound on the regret $\expect{\sum \ell_t(\vv_t) - \ell_t(\vv^*)}$ for every $\|\vv^*\|\le B$ \citep[e.g., see][]{shalev2011online}:
\begin{equation}\label{eq:ogd}\frac{B^2}{\eta} + \eta\sum_{i=1}^T\expect{\|\nabla_t\|^2}+ \eta\sum_{i=1}^T\Var{\bar{\nabla}_t} ~,\end{equation} 
where $\Var{\bar{\nabla}_t}  = \expect{\|\bar{\nabla}_t-\nabla_t\|^2}$.
We can bound the first two terms using standard techniques applicable for the squared loss \citep[e.g., see][]{zhang2004solving,srebro2010smoothness}. The third term depends on our choice of gradient estimate. There are various choices for such an estimate, and we use a version which facilitates our analysis, as explained below.

Assume that at iteration $t$, our function $f_t$ is given by $f_t = \sum_{i=1}^t \alphat_i \xxtp$. We now want to use sampling to obtain an unbiased estimate of $\scalar{f_t}{\xxtp}$.
 This will be done via a two step sampling procedure, as described in Algorithm \ref{alg:spsub}. First, sample an index $i\in[1,\ldots,t]$ by sampling according to the distribution $q(i) \propto |\alphat_i|$.  Next, for the chosen $i$, sample $\bar{\ww}$ according to $\mu$, and use $\psi(\xx;\bar{\ww}) \psi(\xx_i;\bar{\ww})$ to construct an estimate of $\scalar{\fwx{\xx_i}}{\xxtp}$. The resulting unbiased estimate of $\scalar{\fwx{\xx_i}}{\xxtp}$ is denoted by $E_t$ and given by:
\be
E_t = \frac{\|\alphat\|_1}{m} \sum_{i=1}^m \textrm{sgn}(\alphat_i)\es{\xx_i}{\xx_t}
\ee
The corresponding unbiased gradient estimate is:
\be
\bar{\nabla}_t= \left(E_t -y_t\right) \xx_t
\ee
 
The variance of $\bar{\nabla}$ affects the convergence rate and depends on both $\|\alpha\|_1$ and the number of estimations $m$. We wish to maintain $m=O(T)$ estimations per round, while achieving $O(\sqrt{T})$ regret. 

To effectively regularize $\|\alpha\|_1$, we modify the OGD algorithm so that whenever $E_t$ is larger then $16 B$, we do not perform the usual update. Instead, we perform a shrinking step  that divides $\alphat$ (and hence $f_t$) by $4$. Treating $B$ as constant, this guarantees that $\|\alpha\|_1= O(\eta T )$, and in turn $\textrm{Var}(\bar{\nabla}_t) = O(\frac{\eta^2 T^2}{m})$. Setting $\eta = O(1/\sqrt{T})$, we have that $m=O(T)$ estimations are sufficient.

 The rationale for the shrinkage is that whenever $E_t$ is large, it indicates that $f_t$ is ``far away'' from the $B$-ball, and a shrinkage step, similar to projection, brings $f_t$ closer to the optimal element in the $B$-ball. However, due to stochasticity, the shrinkage step does add a further term to the regret bound that we would need to take care of.
\ignore{
\begin{definition}
A set $K$ in the unit ball of a linear space is $D$-bounded if for every $\xx_1,\xx_2 \in K$ we have that $|\es{\xx_1}{\xx_2}|<D$ a.s.
\end{definition}

\begin{theorem}\label{thm:main} Let $K$ be a $D$-bounded set.
Run Algorithm \ref{alg:main} with parameters $T$, $B\ge 1$, $\eta :=\frac{B}{2\sqrt{T}}$, and $m=((16B+1) D B)^2T\log \gamma$, where  $\gamma = \frac{((16B+1)\eta T+B)^2)}{\eta^2}$. We assume that $\eta<1/8$.

Assume that $\xx_t \in K$ and $y_t\in [-1,1]$ for all $t$ and for each $t$ let $f_t= \sum \alpha^{(t)}_i \fwx{\xx_i}$. Then:
\[\sum_{t=1}^T \ell_t(f_t) - \min_{\|f^*\|\le B}\sum_{t=1}^T \ell_t(f^*)   = O\left(B\sqrt{T}\right).\]
The number of times the algorithm performs estimation of the scalar product is \[Tm= \tilde{O}((DB^2T)^2).\]
Finally, for any $\xx$ the scalar product $\scalar{f_t}{\fwx{\xx}}$ (i.e., the regression function) can be estimated within accuracy $\epsilon_0$ using $O(\frac{D^2B^4 T}{\epsilon_0^2}\log 1/\delta )$ estimations.
\end{theorem}}

\subsection{Analysis}\label{sec:analysis}
In what follows we analyze the regret for \algref{alg:main}, and provide a high level proof of Theorem \ref{thm:mainmain}. The appendix provides the necessary lemmas and a more detailed proof. 
We begin by modifying the regret bound for OGD in \eqref{eq:ogd} to accommodate for steps that differ from the standard gradient update, such as shrinkage. We use the following notation for the regret at iteration $t$:
\be
R_t(\vv^*) = \expect{\sum_{t=1}^T \ell_t(\vv_t)-\ell_t(\vv^*)}
\ee

\begin{lemma}\label{lem:core}
Let $\ell_1,\ldots,\ell_T$ be an arbitrary sequence of convex loss functions, and let $\vv_1,\ldots,\vv_T$ be random vectors, produced by an online algorithm. Assume $\|\vv_i\|\le B_T$ for all $i\le T$. For each $t$ let $\bar{\nabla}_t$ be an unbiased estimator of $\nabla \ell_t(\vv_t)$. Denote $\hat{\vv_t}=\vv_{t-1} - \eta \bar{\nabla}_{t-1}$ and let
\begin{equation}\label{eq:ptvv} P_t(\vv^*) = \prob{\|\vv_t-\vv^*\|> \|\hat{\vv}_t-\vv^*\|}.\end{equation}
For every $\|\vv^*\|\le B$ it holds that :
 \bea\label{eq:main}
R_t(\vv^*) &\leq& \frac{B^2}{\eta} +\eta\sum_{t=1}^T\expect{\|\nabla_t\|^2}+\eta\sum_{t=1}^T\Var{\bar{\nabla}_t} + \nonumber \\
 && \sum_{t=1}^T \frac{(B_T+B)^2}{\eta}\expect{P_{t}(\vv^*)} 
 \eea
\end{lemma}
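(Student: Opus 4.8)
The plan is to adapt the standard potential--function proof of OGD, using $\Phi_t = \norm{\vv_t-\vv^*}^2$ as potential and telescoping it, but carefully separating the genuine gradient progress from the effect of the non--standard (shrinkage) steps, which is exactly what the extra term involving $P_{t+1}(\vv^*)$ is meant to absorb. First, by convexity of each $\ell_t$ we have $\ell_t(\vv_t)-\ell_t(\vv^*)\le \scalar{\nabla_t}{\vv_t-\vv^*}$, and since $\vv_t$ is measurable with respect to the history preceding the draw of $\bar{\nabla}_t$ while $\bar{\nabla}_t$ is conditionally unbiased for $\nabla_t=\nabla\ell_t(\vv_t)$, the tower rule gives $\expect{\scalar{\bar{\nabla}_t}{\vv_t-\vv^*}}=\expect{\scalar{\nabla_t}{\vv_t-\vv^*}}\ge \expect{\ell_t(\vv_t)-\ell_t(\vv^*)}$. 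Hence it suffices to bound $\expect{\sum_t \scalar{\bar{\nabla}_t}{\vv_t-\vv^*}}$.

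Second, I would use the exact algebraic identity that holds because $\hat{\vv}_{t+1}=\vv_t-\eta\bar{\nabla}_t$ is precisely the hypothetical plain gradient step: expanding the square yields $\scalar{\bar{\nabla}_t}{\vv_t-\vv^*}=\frac{1}{2\eta}\left(\norm{\vv_t-\vv^*}^2-\norm{\hat{\vv}_{t+1}-\vv^*}^2\right)+\frac{\eta}{2}\norm{\bar{\nabla}_t}^2$. I would then insert $\pm\norm{\vv_{t+1}-\vv^*}^2$ to split the first term into a telescoping part $\frac{1}{2\eta}\left(\norm{\vv_t-\vv^*}^2-\norm{\vv_{t+1}-\vv^*}^2\right)$, which sums to at most $\frac{\norm{\vv_1-\vv^*}^2}{2\eta}\le \frac{B^2}{2\eta}$ using the initialization $\vv_1=0$ and $\norm{\vv^*}\le B$, and a correction part $\frac{1}{2\eta}\left(\norm{\vv_{t+1}-\vv^*}^2-\norm{\hat{\vv}_{t+1}-\vv^*}^2\right)$ capturing how far the actual update departs from the plain step. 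For the gradient--norm term, decomposing $\bar{\nabla}_t=\nabla_t+(\bar{\nabla}_t-\nabla_t)$ and invoking conditional unbiasedness to kill the cross term gives $\expect{\norm{\bar{\nabla}_t}^2}=\expect{\norm{\nabla_t}^2}+\Var{\bar{\nabla}_t}$, producing the second and third terms of \eqref{eq:main}.

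The crux, and the step I expect to be the \emph{main obstacle}, is controlling the correction term $\frac{1}{2\eta}\expect{\norm{\vv_{t+1}-\vv^*}^2-\norm{\hat{\vv}_{t+1}-\vv^*}^2}$. The key idea is that I do not try to control the shrinkage steps exactly; instead I argue by case analysis on the event $A_{t+1}=\{\norm{\vv_{t+1}-\vv^*}>\norm{\hat{\vv}_{t+1}-\vv^*}\}$, whose probability is $P_{t+1}(\vv^*)$. On $A_{t+1}^c$ the correction is non--positive and is simply discarded; on $A_{t+1}$ I bound it crudely by the squared diameter, $\norm{\vv_{t+1}-\vv^*}^2\le (B_T+B)^2$, using $\norm{\vv_{t+1}}\le B_T$ and $\norm{\vv^*}\le B$. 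Taking expectations and applying the tower rule to $\mathbf{1}_{A_{t+1}}$ gives the per--round contribution $\frac{(B_T+B)^2}{2\eta}\expect{P_{t+1}(\vv^*)}$.

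Summing the three contributions over $t=1,\ldots,T$ gives the bound \eqref{eq:main} with every coefficient halved; since each right--hand--side term is nonnegative, doubling the coefficients only weakens the inequality, so the stated form follows at once. The delicate points to get right are the measurability and conditioning underlying both unbiasedness steps (for the inner product and for $\expect{\norm{\bar{\nabla}_t}^2}$), and the conceptual observation that "good" deviating steps, which move $\vv_{t+1}$ closer to $\vv^*$ than the plain step would, can be thrown away for free, so that only the \emph{failure} probability $P_{t+1}(\vv^*)$ of the deviation enters the regret.
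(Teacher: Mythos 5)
Your proposal is correct and follows essentially the same route as the paper's proof: convexity plus conditional unbiasedness to reduce to $\expect{\scalar{\bar{\nabla}_t}{\vv_t-\vv^*}}$, the standard expansion of the plain gradient step $\hat{\vv}_{t+1}=\vv_t-\eta\bar{\nabla}_t$, a case analysis on the event $\|\vv_{t+1}-\vv^*\|>\|\hat{\vv}_{t+1}-\vv^*\|$ with the crude diameter bound $(B_T+B)^2$ on the bad event, telescoping, and the decomposition $\expect{\|\bar{\nabla}_t\|^2}=\expect{\|\nabla_t\|^2}+\Var{\bar{\nabla}_t}$. The only differences are cosmetic (you expand the inner product first and insert $\pm\|\vv_{t+1}-\vv^*\|^2$, whereas the paper bounds $\expect{\|\vv_{t+1}-\vv^*\|^2}$ first and then expands), and like the paper you actually establish the stronger halved-coefficient bound before relaxing to the stated form.
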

See \apref{sec:proof_lem_core} for proof of the lemma.  As discussed earlier, the first three terms on the RHS are the standard bound for OGD from \eqref{eq:ogd}. Note that in the standard OGD it holds that $\vv_t=\hat{\vv}_t$, and therefore $P_t(\vv^*)=0$ and the last term disappears. 

The third term will be bounded by controlling $\|\alpha\|_1$. The last term $P_{t}(\vv^*)$ is a penalty that results from updates that stir $\vv_t$ away from the standard update step $\hat{\vv}_t$. This will indeed happen for the shrinkage step. The next lemma bounds this term. See \apref{sec:proof_ptbound} for proof. 
\begin{lemma}\label{thm:ptbound}
Run \algref{alg:main} with parameters $T$, $B\ge 1$ and $\eta<1/8$. Let $\bar{\nabla}_t$ be the unbiased estimator of $\nabla \ell_t(\vv_t)$ of the form
$\bar{\nabla}_t = (E_t -y_t) \xxtp$. Denote $\hat{\vv}_t= \vv_t-\eta \bar{\nabla}_t$ and define $P_t(\vv^*)$ as in \eqref{eq:ptvv}. Then:
\[P_t(\vv^*)\le 2\exp\left(-\frac{m}{(3\eta t)^2}\right)\]
\end{lemma}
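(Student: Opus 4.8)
The plan is to prove that the probability $P_t(\vv^*)$ of a shrinkage step causing $\vv_t$ to be farther from $\vv^*$ than the standard gradient step $\hat\vv_t$ decays exponentially in $m$. The event $\{\|\vv_t-\vv^*\| > \|\hat\vv_t - \vv^*\|\}$ is relevant only on rounds where the algorithm chooses to shrink rather than take the gradient step, i.e.\ when the estimate $E_t$ exceeds the threshold $16B$. First I would observe that a shrinkage step can only hurt (relative to the unprojected gradient step) when the true scalar product $\scalar{\vv_t}{\xxtp}$ is actually \emph{small}, since geometrically, dividing $\vv_t$ by $4$ moves toward the origin, and this is beneficial exactly when $\vv_t$ lies outside the $B$-ball in the relevant direction. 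So I would argue that $P_t(\vv^*)$ is bounded by the probability that the algorithm shrinks \emph{erroneously}: the event that $|E_t| \ge 16B$ while the true value $|\scalar{\vv_t}{\xxtp}|$ is small (say below $8B$ or some constant fraction of the threshold). This reduces the whole statement to a large-deviation bound on the estimator $E_t$ around its mean.

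The core step is then a concentration inequality. Recall $E_t = \frac{\|\alphat\|_1}{m}\sum_{k=1}^m \mathrm{sgn}(\alpha_{i_k})\psi(\xx_{i_k};\bar\ww_k)\psi(\xx_t;\bar\ww_k)$, a sum of $m$ i.i.d.\ terms whose expectation equals the true scalar product $\scalar{f_t}{\xxtp}$, by construction of Algorithm~\ref{alg:spsub}. Since $|\psi|<1$, each summand $\mathrm{sgn}(\alpha_{i_k})\psi(\xx_{i_k};\bar\ww_k)\psi(\xx_t;\bar\ww_k)$ lies in $[-1,1]$, so the scaled term $\frac{\|\alphat\|_1}{m}(\cdot)$ is bounded in absolute value by $\frac{\|\alphat\|_1}{m}$. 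I would therefore apply Hoeffding's inequality to the $m$ bounded i.i.d.\ variables: the deviation $|E_t - \scalar{f_t}{\xxtp}|$ exceeds $s$ with probability at most $2\exp\!\left(-\frac{m s^2}{2\|\alphat\|_1^2}\right)$. Setting the deviation threshold $s$ to a constant fraction of $16B$ (the gap between the threshold and the ``truly small'' regime) collapses the bound to the target form.

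To land the exponent $\frac{m}{(3\eta t)^2}$ I need a uniform bound on $\|\alphat\|_1$. This is where the shrinking mechanism earns its keep: the discussion preceding the lemma asserts $\|\alpha\|_1 = O(\eta T)$, but more precisely, on round $t$ each update adds a coefficient $\alpha_t^{(t+1)} = \eta(E_t - y_t)$, and because of the thresholding the increment is bounded by roughly $\eta\cdot(16B + 1)$, so after $t$ rounds $\|\alphat\|_1 \le O(\eta t B)$; absorbing the $B$-factor into constants gives $\|\alphat\|_1 \le 3\eta t$ (with appropriate constant bookkeeping), which is exactly what is needed for the denominator $(3\eta t)^2$ in the stated bound. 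Plugging $\|\alphat\|_1 \le 3\eta t$ and a constant deviation $s$ into the Hoeffding bound yields $P_t(\vv^*) \le 2\exp\!\left(-\frac{m}{(3\eta t)^2}\right)$.

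I expect the main obstacle to be the first, geometric reduction rather than the concentration itself. Hoeffding is routine once boundedness and the $\|\alpha\|_1$ estimate are in place, but carefully justifying that $P_t(\vv^*)$ (a statement about distances to an \emph{arbitrary} competitor $\vv^*$ in the $B$-ball) is controlled by the event of an erroneous shrinkage requires a clean geometric argument: one must show that shrinking toward the origin never increases the distance to any point in the $B$-ball unless $\vv_t$ was already well inside it, and then verify that ``well inside'' corresponds precisely to a small true scalar product that the estimator $E_t$ would have to grossly overshoot to trigger the shrinkage. Getting the constants to line up so that the admissible deviation $s$ is a fixed fraction of $16B$ independent of $t$—and hence the $B$-dependence cancels cleanly in the final exponent—is the delicate bookkeeping I would watch most closely.
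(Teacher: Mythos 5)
Your overall skeleton is the same as the paper's: reduce the event defining $P_t(\vv^*)$ to a large-deviation event for the estimator $E_t$, then apply Hoeffding to the $m$ bounded i.i.d.\ terms using an $\ell_1$ bound on $\alphat$, with the $B$ in the deviation threshold cancelling the $B$ in the $\ell_1$ bound. But your geometric reduction has a genuine gap. You claim $P_t(\vv^*)$ is bounded by the probability of an ``erroneous shrinkage,'' i.e.\ $|E_t|\ge 16B$ while $|\scalar{\vv_t}{\xxtp}|\le 8B$. That inequality is false as a deterministic reduction: the shrinkage step can be worse than the hypothetical gradient step even when $\vv_t$ is far outside the ball and the true scalar product is large, namely when $E_t$ \emph{overshoots} so badly that the gradient step would jump from far outside the ball to a point near $\vv^*$. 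Concretely, take $\vv_t = 10B\,e$ and $\xxtp = e$ for a unit vector $e$, $y_t = 0$, $\vv^* = 0.99B\,e$, and suppose the estimate comes out as $E_t = 9B/\eta$ (which exceeds $16B$, so shrinkage is triggered). Then $\hat{\vv}_{t+1} = \vv_t - \eta E_t\, e = B\,e$, so $\|\hat{\vv}_{t+1}-\vv^*\| = 0.01B$, while $\|\vv_{t+1}-\vv^*\| = \|\tfrac{1}{4}\vv_t - \vv^*\| = 1.51B$. The event defining $P_{t+1}(\vv^*)$ occurs, yet your ``erroneous shrinkage'' event does not, since $\scalar{\vv_t}{\xxtp} = 10B > 8B$. (This scenario requires a huge estimation error, so it has tiny probability and the lemma itself is safe --- but your proposed reduction, being a pointwise containment of events, is simply not valid.)

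The paper closes exactly this hole with a second bad event $\mathcal{E}_1^t$: $|E_t|>16B$ \emph{and} $|E_t| > \frac{1}{4\eta}\|\vv_t\|$ (Lemma \ref{lem:ptbound} in the appendix). If neither $\mathcal{E}_1^t$ nor the erroneous-shrinkage event $\mathcal{E}_2^t$ occurs, the gradient step can shed at most $\frac{1}{4}\|\vv_t\|+\eta$ of the norm, so $\|\hat{\vv}_{t+1}-\vv^*\| \ge \frac{3}{4}\|\vv_t\|-\eta-B \ge \frac{1}{4}\|\vv_t\|+B \ge \|\vv_{t+1}-\vv^*\|$, and the shrinkage provably does not hurt. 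Crucially, $\mathcal{E}_1^t$ is controlled by the \emph{same} concentration bound: by Cauchy--Schwarz $|\scalar{\vv_t}{\xxtp}|\le\|\vv_t\|$, so on $\mathcal{E}_1^t$ with $\|\vv_t\|\ge 8B$ one gets $|E_t - \scalar{\vv_t}{\xxtp}| \ge (\frac{1}{4\eta}-1)\|\vv_t\| \ge 8B$ when $\eta<1/8$ --- this is where the hypothesis $\eta<1/8$ enters, and where the factor $2$ in the lemma comes from (a union over the two events), which your single-event reduction would not produce. A minor additional slip: you cannot ``absorb the $B$-factor into constants'' to claim $\|\alphat\|_1 \le 3\eta t$, since $B$ is a free parameter; the correct accounting is $\|\alphat\|_1 \le (16B+1)\eta t$ (Lemma \ref{lem:estimation}) combined with $8B/\bigl((16B+1)\eta t\bigr) \ge 1/(3\eta t)$ for $B\ge 1$, which is the cancellation you do anticipate in your closing paragraph.
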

The following lemma (see \apref{sec:proof_bound_var} for proof) bounds the second and third terms of \eqref{eq:main}.
\begin{lemma} \label{lemma:bound_var} Consider the setting as in \lemref{thm:ptbound}. Then
$\Var{\bar{\nabla}_t} \le \frac{((16 B+1)\eta t)^2}{m}$ and $\expect{\|{\nabla}_{t}\|^2} \le 2\expect{\ell_t(\vv_t)}$.
\end{lemma}

\paragraph{Proof of \thmref{thm:mainmain}}
Combining Lemmas \ref{lem:core}, \ref{thm:ptbound} and \ref{lemma:bound_var} and rearranging we get:
 \bea\label{eq:almost_main}
&& (1-2\eta)\expect{R_t(\vv^*)} \le \frac{B^2}{\eta}  + 2\eta \sum_{t=1}^T \ell_t(\vv^*) + \\
&& \eta \frac{ ((16B+1)\eta T)^2 T}{m}+ \frac{(B_T+B)^2}{\eta} \sum_{t=1}^TP_t(\vv^*)  \nonumber
\eea
To bound the second term in \eqref{eq:almost_main} we note that:
 \begin{equation}\label{eq:ltbound}\min_{\|\vv^*\|<B} \sum_{t=1}^T \ell_t(\vv^*)\le \sum_{t=1}^T \ell_t(0)\le T.\end{equation}
 We next set $\eta$ and $m$ as in the statement of the theorem. Namely: $\eta =\frac{B}{2\sqrt{T}}$, and $m=((16B+1) B)^2T\log \gamma$, where  $\gamma = \max\left(\frac{((16B+1)\eta T+B)^2)}{\eta^2},e\right)$.
 This choice of $m$ implies that $m>((16B+1)\eta T)^2$, and hence the third term in \eqref{eq:almost_main} is upper bounded by $T$.
 
Next we have that $m> (3\eta t)^2 \log \gamma$ for every $t$, and by the bound on $B_T$ we have that $\gamma> \frac{(B+B_T)^2}{\eta^2}$. Taken together with \lemref{thm:ptbound} we have that:
\begin{equation}\label{eq:ptogd} \frac{(B_T+B)^2}{\eta} \sum_{t=1}^T P_t(\vv^*)\le \eta T.\end{equation}
The above bounds imply that:
\[(1-2\eta)\expect{R_t(\vv^*)}\le \frac{B^2}{\eta}  + 2\eta T +\eta T+\eta T\]

Finally by choice of $\eta$, and dividing both sides by $(1-2\eta)$ we obtain the desired result.


\section{Experiments \label{sec:exp}}
In this section we provide a toy experiment to compare our Shrinking Gradient algorithm to other random feature based methods. In particular, we consider the following three algorithms:
 {\bf Fixed-Random:} Sample a set of $r$ features $\ww_1,\ldots,\ww_r$ and evaluate these on all the train and test points. Namely, all $\xx$ points will be evaluated on the same features. This is the standard random features approach proposed in \citet{rahimi2007random,rahimi2009weighted}.
{\bf Doubly Stochastic Gradient Descent \cite{dai2014scalable}:} Here each training point $\xx$ {samples} $k$ features $\ww_1,\ldots,\ww_k$. These features will from that point on be used for evaluating dot products with $\xx$. Thus, different $\xx$ points will use different features.  
 {\bf Shrinking Gradient:} This is the approach proposed here in \secref{sec:results}. Namely, each training point $\xx$ samples $m$ features in order to calculate the dot product with the current regression function.   

\ignore{
\begin{itemize}
\item {\bf Fixed-Random:} Sample a set of $r$ features $\ww_1,\ldots,\ww_r$ and evaluate these on all the train and test points. Namely, all $\xx$ points will be evaluated on the same features. This is the standard random features approach proposed in \citet{rahimi2007random,rahimi2009weighted}.
\item {\bf Doubly Stochastic Gradient Descent \cite{dai2014scalable}:} Here each training point $\xx$ {samples} $k$ features $\ww_1,\ldots,\ww_k$. These features will from that point on be used for evaluating dot products with $\xx$. Thus, different $\xx$ points will use different features.  
\item {\bf Shrinking Gradient:} This is the approach proposed here in \secref{sec:results}. Namely, each training point $\xx$ samples $m$ features in order to calculate the dot product with the current regression function.   
\end{itemize}
}
In comparing the algorithms we choose $r,k,m$ so that the same overall number of features is calculated. For all methods we explored different initial step sizes and schedules for changing the step size. 

The key question in comparing the three algorithms is how well they use a given budget of random features. To explore this we perform an experiments to simulate the high dimensional feature case. We consider vectors $\xx\in\reals^D$, where a random feature $w$ corresponds to a uniform choice of coordinate $w$ in $\xx$. We work in the regime where $D$ is {\em large} in the sense that $D>T$, where $T$ is the size of the training data. Thus random  sampling of $T$ features will not reveal all coordinates of $\xx$. The training set is generated as follows. First, a training set $\xx_1,\ldots,\xx_T\in\reals^D$ is sampled from a standard Gaussian. We furthermore clip negative values to zero, in order to make the data sparser and more challenging for feature sampling.
Next a weight vector $\aa\in\reals^D$ is chosen as a random sparse linear combination of the training points. This is done in order for the true function to be in the corresponding RKHS.  Finally, the training set is labeled using $y_i = \aa\cdot\xx_i$.

\ignore{
\begin{itemize}
\item A training set $\xx_1,\ldots,\xx_T\in\reals^D$ is sampled from a standard Gaussian. We furthermore clip negative values to zero, in order to make the data sparser and more challenging for feature sampling.
\item A weight vector $\aa\in\reals^D$ is chosen as a random sparse linear combination of the training points. This is done in order for the true function to be in the corresponding RKHS.  
\item The training set is labeled using $y_i = \aa\cdot\xx_i$.
\end{itemize}
}
During training we do not assume that the algorithms have access to $\xx$. Rather they can uniformly sample coordinates from it, which mimics our setting of random features. For the experiment we take $D=550,600,\ldots,800$ and $T=200$. All algorithms perform one pass over the data, to emulate the online regret setting. The results shown in Figure~\ref{linear_experiments} show that our method indeed achieves a lower loss while working with the same feature budget. 

\begin{figure}[th]
	\begin{center}
		\centerline{\includegraphics[scale=0.4]{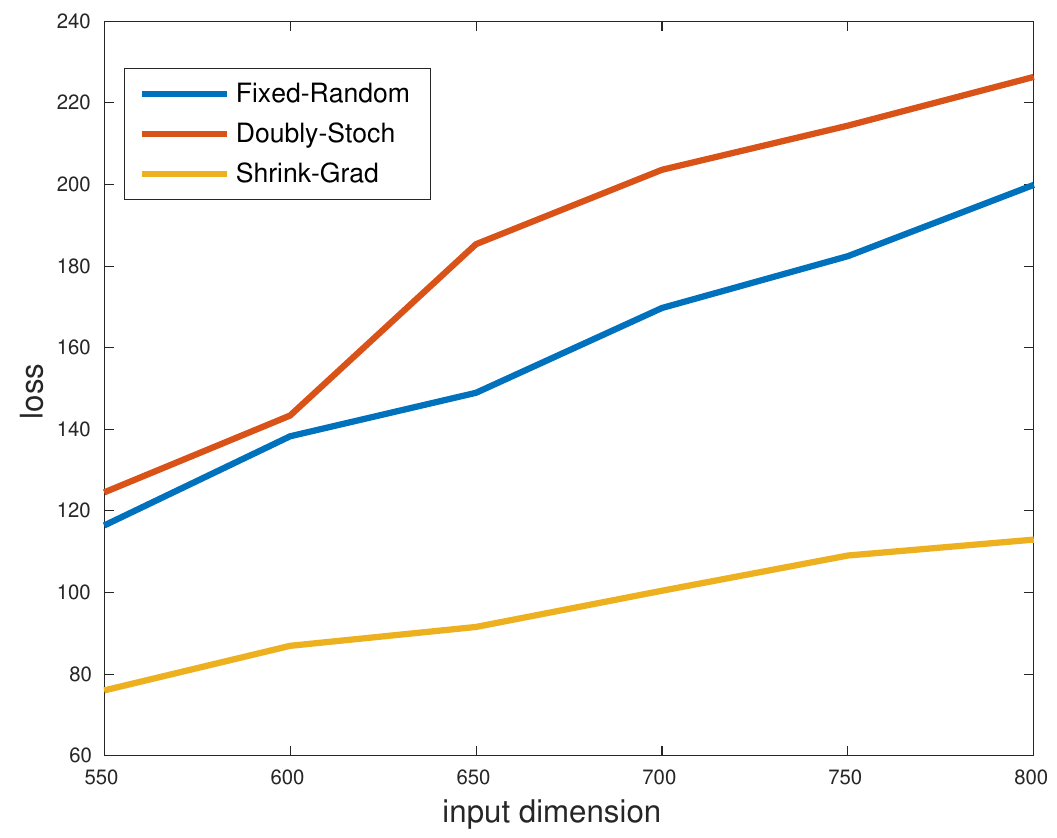}}
		\caption{Comparison of three random feature methods. See \secref{sec:exp} for details.}
		\label{linear_experiments}
	\end{center}
\end{figure} 

\section{Discussion}
We presented a new online algorithm that employs kernels implicitly but avoids the kernel trick assumption. Namely, the algorithm can be invoked even when one has access to only estimations of the scalar product. The problem was motivated by kernels resulting from neural nets, but it can of course be applied to any scalar product of the form we described. As an example of an interesting extension, consider a setting where a learner can observe an unbiased estimate of a coordinate in a kernel matrix, or alternatively the scalar product between any two observations. Our results imply that in this setting the above rates are applicable, and at least for the square loss, having no access to the true values in the kernel matrix is not necessarily prohibitive during training.

The results show that with sample size $T$ we can achieve error of $O(\frac{B}{\sqrt{T}})$. As demonstrated in \citet{shamir2014sample} these rates are optimal, even when the scalar product is computable. To achieve this rate our algorithm needs to perform $\tilde{O}(B^4 T^2)$ scalar product estimations. When the scalar product can be computed, existing kernelized algorithms need to observe a fixed proportion of the kernel matrix, hence they observe order of $\Omega(T^2)$ scalar products. In \citet{cesa2015complexity} it was shown that when the scalar product can be computed exactly, one would need access to at least $\Omega(T)$ entries to the kernel matrix. It is still an open problem whether one has to access $\Omega(T^2)$ entries when the kernel can be computed exactly. However, as we show here, for fixed $B$ even if the kernel can only be estimated $\tilde{O}(T^2)$ estimations are enough. It would be interesting to further investigate and improve the performance of our algorithm in terms of the norm bound $B$.
\ignore{
Another point to consider in future research is the scalability of the predictor at test time. We presented a training algorithm that is comparable with standard kernel methods. Our output predictor can be efficiently estimated with $\epsilon$ accuracy and requires $O(\frac{T}{\epsilon^2})$ generated features. 
Squared loss is often used as a convex surrogate for $0-1$ loss, and for binary classification constant $\epsilon_0$ is enough under appropriate assumptions. It is still interesting to find out if methods such as presented in \cite{rahimi2007random,dai2014scalable} may be used to scale down the representation of the predictor.}

To summarize, we have shown that the {\em kernel trick} is not strictly necessary in terms of sample complexity. Instead, simply sampling random features via our proposed algorithm results in a similar sample complexity. Recent empirical results by \citet{zhang2016understanding} show that using a large number of random features and regression comes close to the performance of the first successful multilayer CNNs \cite{krizhevsky2012imagenet} on CIFAR-10.  Although deep learning architectures still substantially outperform random features, it is conceivable that with the right choice of random features, and scalable learning algorithms like we present here, considerable improvement in performance is possible.   

\ignore{
\section*{Acknowledgement}
The authors would like to thank Tomer Koren for helpful comments and suggestions. Roi Livni  is a recipient of the Google Europe Fellowship in
Learning Theory, and this research is supported in part by
this Google Fellowship, and a Google Research Award.  
}


\appendix
\section{Estimation Concentration Bounds}
In this section we provide concentration bounds for the estimation procedure in \algref{alg:spsub}.
\begin{lemma}\label{lem:spsub}
Run \algref{alg:spsub} with $\alpha$ and, $\{\xx_i\}_{i=1}^T$,  $\xx$, and $m$. Let $\vv=\sum \alpha_i \xxip$. Assume that $|\action{\xx}{\ww}|<1$ for all $\ww$ and $\xx$. Let $E$ be the output of \algref{alg:spsub}. Then $E$ is an unbiased estimator for $\left<\vv,\xxp\right>$ and:
\be
\prob{ |E- \<\vv,\xxp\>| > \epsilon} \le \exp\left(-\frac{m\epsilon^2}{\|\alpha\|_1^2}\right)
\ee
\end{lemma}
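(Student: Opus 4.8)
The plan is to prove the two assertions of the lemma in turn: that $E$ is an unbiased estimator of $\scalar{\vv}{\xxp}$, and then the Hoeffding-type tail bound. Since $E$ is an importance-sampling estimator, the substance of the proof lies in checking that the two-stage sampling scheme of \algref{alg:spsub} --- drawing an index $i$ from $q(i)=|\alpha_i|/\|\alpha\|_1$ and then a parameter $\bar{\ww}\sim\mu$ --- together with the sign correction $\mathrm{sgn}(\alpha_i)$ and the rescaling by $\|\alpha\|_1$, reproduces the target scalar product exactly.

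For unbiasedness, I would fix a single summand $E^{(k)}=\mathrm{sgn}(\alpha_i)\,\es{\xx_i}{\xx}$ and compute its expectation by conditioning on the two independent sampling stages. Conditioned on the drawn index $i$, the inner expectation over $\bar{\ww}\sim\mu$ is $\E_{\bar{\ww}}[\es{\xx_i}{\xx}]=\scalar{\xxip}{\xxp}$, which is exactly the kernel entry by the definition of the scalar product on $\lh$. Taking the outer expectation over $i\sim q$, the factor $q(i)\,\mathrm{sgn}(\alpha_i)=\alpha_i/\|\alpha\|_1$ collapses, so that $\E[E^{(k)}]=\frac{1}{\|\alpha\|_1}\sum_i \alpha_i\scalar{\xxip}{\xxp}=\frac{1}{\|\alpha\|_1}\scalar{\vv}{\xxp}$, using linearity of the scalar product and $\vv=\sum_i\alpha_i\xxip$. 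Multiplying by the prefactor $\|\alpha\|_1/m$ and summing the $m$ identically distributed terms yields $\E[E]=\scalar{\vv}{\xxp}$.

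For the tail bound, the key facts are that the $E^{(k)}$ are IID across $k$ and almost surely bounded: since $|\action{\xx_i}{\bar{\ww}}|<1$ and $|\action{\xx}{\bar{\ww}}|<1$ by hypothesis, each $|E^{(k)}|\le 1$, so $E^{(k)}\in[-1,1]$. I would then write $E-\scalar{\vv}{\xxp}=\frac{\|\alpha\|_1}{m}\sum_{k=1}^m\bigl(E^{(k)}-\E[E^{(k)}]\bigr)$ and apply Hoeffding's inequality to the centered empirical mean $\frac1m\sum_k E^{(k)}$: it deviates from its expectation by more than $\epsilon/\|\alpha\|_1$ with probability at most $\exp\!\bigl(-m\epsilon^2/\|\alpha\|_1^2\bigr)$, up to the standard constants produced by the range-$2$ Hoeffding bound.

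The concentration step is routine once boundedness is established; the only real content is the unbiasedness computation. The place I would be most careful is precisely the interplay of $\mathrm{sgn}(\alpha_i)$, the sampling weight $q(i)\propto|\alpha_i|$, and the normalizer $\|\alpha\|_1$, which together recover the signed coefficient $\alpha_i$. This is also what makes $\|\alpha\|_1$ (rather than $\|\alpha\|_2$ or the number of active terms) the quantity governing the estimator's fluctuations, which explains why the subsequent regret analysis must control $\|\alpha\|_1$ via the shrinkage step.
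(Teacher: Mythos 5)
Your proposal is correct and follows essentially the same route as the paper: establish unbiasedness of each $E^{(k)}$ via the two-stage conditioning (where $q(i)\,\mathrm{sgn}(\alpha_i)=\alpha_i/\|\alpha\|_1$ recovers the signed coefficients), note that the rescaled summands $\|\alpha\|_1 E^{(k)}$ are IID and bounded by $\|\alpha\|_1$ in absolute value, and conclude by Hoeffding's inequality. The paper's proof is just a terser version of this, and your remark about the constants is apt --- the verbatim Hoeffding bound gives $2\exp\bigl(-m\epsilon^2/(2\|\alpha\|_1^2)\bigr)$ rather than the stated constant, a discrepancy the paper's own proof shares and which is immaterial to the downstream analysis.
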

\begin{proof}
  Consider the random variables $\|\alpha\|_1 E^{(k)}$  (where $E^{(k)}$ is as defined in \algref{alg:spsub}) and note that they are IID. One can show that $\expect{\|\alpha\|_1 E^{(k)}}=\sum \alpha_i\expect{\action{\xx_i}{\ww}\action{\xx}{\ww}} = \<\vv,\xxp\>$. By the bound on $\action{\xx}{\ww}$ we have that $\left|\|\alpha\|_1E^{(k)}\right| < \|\alpha\|_1$ with probability $1$. Since $E=\frac{1}{m} \sum E^{(k)}$ the result follows directly from Hoeffding's inequality.
\end{proof}
Next,  we bound the $\alphat$ coeffcients and obtain a concentration bound for the estimated dot product $E_t$. 
\begin{lemma}\label{lem:estimation}
The $\alphat$ obtained in \algref{alg:main} satisfies:
\[\|\alpha^{(t)}\|_1 \le   (16B+1)\eta t .\]

As a corollary of this and Lemma \ref{lem:spsub} we have that the function $\vv_t$ satisfies:
\be
\prob{ |E_t- \<\vv_t,\xxtp\>| > \epsilon} \le \exp\left(-\frac{\epsilon^2 m}{((16B+1) \eta t)^2}\right)
\ee

\end{lemma}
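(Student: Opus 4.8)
The plan is to prove the two parts in sequence: first the deterministic $\ell_1$ bound on $\alphat$ by induction on $t$, and then obtain the concentration statement as an immediate substitution into \lemref{lem:spsub}.

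For the first part I would argue by induction on $t$, tracking how the $\ell_1$ norm changes across one iteration of \algref{alg:main}. The base case is immediate since $\alpha^{(1)}=0$, so $\|\alpha^{(1)}\|_1 = 0 \le (16B+1)\eta$. For the inductive step, note that the algorithm takes exactly one of two branches. The key (and only slightly subtle) structural fact is that at the start of round $t$ the coordinate $\alphat_t$ is still zero: coordinate $i$ is ever assigned only at round $i$, and the shrinkage step maps $0$ to $0$. Hence the normal update does not disturb the existing mass, it merely appends one new coordinate.

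In the normal branch, which is entered precisely when $|E_t| < 16B$, we set $\alpha^{(t+1)}_t = \eta(E_t - y_t)$ while leaving all other coordinates unchanged. Since $|E_t| < 16B$ and $|y_t| \le 1$, the new coordinate satisfies $|\alpha^{(t+1)}_t| \le \eta(16B+1)$, so $\|\alpha^{(t+1)}\|_1 = \|\alphat\|_1 + |\alpha^{(t+1)}_t| \le \|\alphat\|_1 + \eta(16B+1)$. In the shrinkage branch we have $\alpha^{(t+1)} = \frac{1}{4} \alphat$, hence $\|\alpha^{(t+1)}\|_1 = \frac{1}{4}\|\alphat\|_1 \le \|\alphat\|_1$, which is even smaller. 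In both cases $\|\alpha^{(t+1)}\|_1 \le \|\alphat\|_1 + \eta(16B+1)$; unrolling this recursion from $\|\alpha^{(1)}\|_1 = 0$ over the $t-1$ completed rounds gives $\|\alphat\|_1 \le (16B+1)\eta(t-1) \le (16B+1)\eta t$, as claimed.

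For the corollary I would simply instantiate \lemref{lem:spsub} with $\alpha = \alphat$, the stored examples $\{\xx_i\}$, the current point $\xx_t$, and $\vv = \vv_t = \sum_i \alphat_i \xxip$, which gives $\prob{|E_t - \scalar{\vv_t}{\xxtp}| > \epsilon} \le \exp(-m\epsilon^2 / \|\alphat\|_1^2)$. Substituting $\|\alphat\|_1 \le (16B+1)\eta t$ into the denominator and using that $x \mapsto \exp(-c/x)$ is increasing in $x>0$ (so enlarging the denominator only weakens the bound) yields the stated inequality. There is essentially no hard step here: the entire content is the per-branch induction, and the only place to be careful is confirming that the branch condition $|E_t|<16B$ is exactly what bounds the size of the freshly added coordinate, and that ``freshly added'' is literal, i.e. coordinate $t$ carried no prior mass.
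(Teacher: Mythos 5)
Your proof is correct and follows essentially the same route as the paper's: an induction on $t$ splitting into the shrinkage branch ($\|\alpha^{(t+1)}\|_1 = \frac{1}{4}\|\alphat\|_1$) and the gradient branch (new coordinate bounded by $\eta|E_t - y_t| < \eta(16B+1)$), followed by direct substitution into \lemref{lem:spsub}. Your version is in fact slightly more careful than the paper's terse argument, since you explicitly verify that coordinate $t$ carries no prior mass and spell out the monotonicity step in the corollary, both of which the paper leaves implicit.
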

\begin{proof}
We prove the statement by induction. We separate into two cases, depending on whether the shrinkage step was performed or
not. 

If $|E_t|\geq 16B$ the algorithm sets $\alpha^{(t+1)}=\frac{1}{4} \alpha^{(t)}$, and: 
\[\|\alpha^{(t+1)}\|_1=\frac{1}{4}\|\alpha^{(t)}\|_1 \le (16 B+1)\eta (t+1)\]

If $|E_t|< 16B$ the gradient update is performed. Since $|y_t|\leq 1$ we have that $|E_t-y_t |<16B +1$ and:
\[ \|\alpha^{(t+1)}\|_1 \le \|\alpha^{(t)}\|_1 +\eta |E_t-y_i |\le (16 B+1)\eta (t+1).\]
\end{proof}

\section{Proofs of Lemmas}

\subsection{Proof of \lemref{lem:core} \label{sec:proof_lem_core}}
First, by convexity we have that
\begin{equation}\label{eq:regret1} 2(\ell_t(\vv_t)-\ell_t(\vv^*)) \le 2\scalar{\nabla_t}{\vv_t -\vv^*}.\end{equation}
Next we upper bound $\scalar{\nabla_t}{\vv_t-\vv^*}$. Denote by $\mathcal{E}$ the event $\|\vv_{t+1}-\vv^*\|>\|\hat{\vv}_{t+1}-\vv^*\|$. Note that:
\bea
&& \expect{\|\vv_{t+1} - \vv^*\|^2}\le \expect{\|\hat{\vv}_{t+1}-\vv^*\|^2} + \nonumber \\
&&  \expect{\|\vv_{t+1}-\vv^*\|^2\big| \mathcal{E} }\cdot  P_{t+1}(\vv^*) \nonumber  \\
&& \le \expect{\|\hat{\vv}_{t+1}-\vv^*\|^2}+(B+B_T)^2 P_{t+1}(\vv^*) \nonumber
\eea

Plugging in $\hat{\vv}_{t+1}= \vv_t-\eta\bar{\nabla}_t$, summing over $t$ and using \eqref{eq:regret1} 
and  $\expect{\|\bar{\nabla_t}\|^2}=\expect{\|\nabla_t\|^2}+\Var{\bar{\nabla_t}}$, we obtain the desired result.
\ignore{
\[\expect{\|\vv_{t+1} - \vv^*\|^2}\le \expect{\|\vv_t-\vv^*\|^2}+\eta^2 \expect{\|\bar{\nabla}_t\|^2} -2\eta\expect{\<\bar{\nabla}_t,\vv_t-\vv^*\>}+(B+B_T)^2P_{t+1}(\vv^*)\]

Dividing by $\eta$ we have that:
\begin{equation}\label{eq:regret2} 2\expect{\<\bar{\nabla}_t,\vv_t-\vv^*\> }\le \frac{\expect{\|\vv_t-\vv^*\|^2} - \expect{\|\vv_{t+1}-\vv^*\|^2}}{\eta} +\frac{1}{\eta} \expect{(B+B_T)^2 P_{t+1}(\vv^*)} +\eta\expect{\|\bar{\nabla}_t\|^2}\end{equation}

Taking  \ref{eq:regret1} and \ref{eq:regret2} and summing we have:

\[ 2 \expect{\sum_{t=1}^T \ell_t(\vv_t)- \ell_t(\vv^*)} \le 
 2 \expect{\nabla_t^\top (f_t-f^*)}
=2\expect{ \bar{\nabla}_t^\top (f_t-f^*)} \le
 \]

\[\frac{\|\vv^*\|^2}{\eta}+ \frac{1}{\eta} \sum_{t=1}^T \expect{(B+B_T)^2 P_{t+1}(\vv^*)} +\eta\sum_{t=1}^T \expect{\|\bar{\nabla}_t\|^2} .\]

Finally note that $\expect{\|\bar{\nabla_t}\|^2}=
\expect{\|\nabla_t\|^2}+\Var{\bar{\nabla_t}}$ to obtain the result.
}
\subsection{Proof for \lemref{thm:ptbound} \label{sec:proof_ptbound}}
To prove the bound in the lemma, we first bound the event $P_t(\vv^*)$ w.r.t to two possible events:
\begin{lemma}\label{lem:ptbound}
Consider the setting as in \lemref{thm:ptbound}. Run \algref{alg:main} and for each $t$ 
consider the following two events:
\begin{itemize}
\item{ $\mathcal{E}^t_1$  \ : \
$|E_t|>16 B$ and $|E_t|>\frac{1}{4\eta}\|\vv_t\|.$
}
\item { $\mathcal{E}^t_2$ \ : \ $|E_t|>16 B$ and $\|\vv_t\|<8 B$.}
\end{itemize}
For every $\|\vv^*\|<B$ we have that $P_t(\vv^*) <\prob{\mathcal{E}_1^t\cup \mathcal{E}_2^t}$.
\end{lemma}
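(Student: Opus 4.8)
The plan is to reduce the probabilistic statement to a deterministic set inclusion: I will show that the event underlying $P_t(\vv^*)$ is contained in $\mathcal{E}_1^t \cup \mathcal{E}_2^t$, after which monotonicity of probability gives the claim. The first observation is that the comparison defining $P_t(\vv^*)$ is only nontrivial on rounds where a shrinkage step occurs. If $|E_t| < 16B$ the algorithm takes the ordinary gradient update, so the realized iterate coincides with the gradient step $\hat{\vv}_t = \vv_t - \eta\bar{\nabla}_t$ and the strict inequality in \eqref{eq:ptvv} fails identically. Hence the relevant event lies inside $\{|E_t|\ge 16B\}$, which is exactly the clause shared by $\mathcal{E}_1^t$ and $\mathcal{E}_2^t$. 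It therefore remains to prove that, on rounds where shrinkage is performed, at least one of the two extra clauses $|E_t|>\tfrac{1}{4\eta}\norm{\vv_t}$ or $\norm{\vv_t}<8B$ must hold whenever the shrinkage update $\tfrac14\vv_t$ is strictly farther from $\vv^*$ than $\hat{\vv}_t$.

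I would establish this by contraposition. Suppose a shrinkage step occurs and \emph{neither} extra clause holds, i.e. $|E_t|\le \tfrac{1}{4\eta}\norm{\vv_t}$ and $\norm{\vv_t}\ge 8B$; I will show shrinkage cannot increase the distance to $\vv^*$, that is $\norm{\tfrac14\vv_t - \vv^*}\le \norm{\vv_t - \eta\bar{\nabla}_t - \vv^*}$. Writing $r=\norm{\vv_t}$, the triangle inequality gives the upper bound $\norm{\tfrac14\vv_t - \vv^*}\le \tfrac{r}{4}+B$ using $\norm{\vv^*}<B$. For the right-hand side I bound the step length: since $\norm{\xxtp}^2=\int \psi(\xx_t;\ww)^2\,d\mu(\ww)\le 1$ and $|y_t|\le 1$, we get $\eta\norm{\bar{\nabla}_t}=\eta|E_t-y_t|\,\norm{\xxtp}\le \eta(|E_t|+1)\le \tfrac{r}{4}+\eta$, where the final step uses the negated clause $\eta|E_t|\le \tfrac{r}{4}$. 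The reverse triangle inequality then yields $\norm{\vv_t-\eta\bar{\nabla}_t-\vv^*}\ge (r-B)-(\tfrac{r}{4}+\eta)=\tfrac{3r}{4}-B-\eta$.

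Comparing the two bounds, it suffices to verify $\tfrac{r}{4}+B\le \tfrac{3r}{4}-B-\eta$, i.e. $r\ge 4B+2\eta$. This follows from the second negated clause $r\ge 8B$ together with $B\ge 1$ and $\eta<1/8$, since then $4B+2\eta< 4B+\tfrac14 \le 8B \le r$. This proves the contrapositive, so the shrinkage step can only hurt when $\mathcal{E}_1^t$ or $\mathcal{E}_2^t$ occurs; combined with the first paragraph this gives the inclusion and hence $P_t(\vv^*)\le \prob{\mathcal{E}_1^t\cup \mathcal{E}_2^t}$.

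The main obstacle I anticipate is the bookkeeping of the thresholds: the constants $16B$, $8B$ and $1/(4\eta)$ must be calibrated so that the two negated clauses together force $r\ge 4B+2\eta$, and the very choice to split into $\mathcal{E}_1^t$ (the estimated gradient step is large relative to $\norm{\vv_t}$, so it could overshoot $\vv^*$) versus $\mathcal{E}_2^t$ ($\vv_t$ is too small, so contracting toward the origin need not approach $\vv^*$) is precisely what isolates the benign regime where shrinkage provably helps. I would also take care with the indexing convention relating $\vv_t$, the gradient step $\hat{\vv}_t$, and the realized next iterate $\tfrac14\vv_t$, since the definition of $\hat{\vv}_t$ in \eqref{eq:ptvv} and in the hypotheses of \lemref{thm:ptbound} differ by a shift, and this is where an off-by-one or sign slip is most likely to enter.
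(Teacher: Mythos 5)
Your proof is correct and takes essentially the same route as the paper's: both reduce the event in $P_t(\vv^*)$ to the shrinkage rounds $\{|E_t|> 16B\}$, and then argue by contraposition that if $|E_t|\le\frac{1}{4\eta}\|\vv_t\|$ and $\|\vv_t\|\ge 8B$, then the triangle-inequality bounds $\|\tfrac14\vv_t-\vv^*\|\le\tfrac{r}{4}+B$ and $\|\vv_t-\eta\bar{\nabla}_t-\vv^*\|\ge\tfrac{3r}{4}-B-\eta$ together with $r\ge 8B$, $B\ge 1$, $\eta<1/8$ show the shrinkage step is no farther from $\vv^*$ than the gradient step. Your closing caution about the index shift between \eqref{eq:ptvv} and the hypotheses of \lemref{thm:ptbound} is well founded --- the paper itself silently conflates $P_t$ with $P_{t+1}$ in exactly this way.
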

\begin{proof}
Denote the event $|E_t|>16B$ by $\mathcal{E}^t_0$. Note that if $\mathcal{E}^t_0$ does not happen, then $\vv_t=\hat{\vv}_t$. Hence trivially
\[P_t(\vv^*) = \prob{\|\vv_t-\vv^*\| > \|\hat{\vv}_t -\vv^*\| \wedge \mathcal{E}^t_0}\]

We will assume that: (1) $|E_t|>16 B.$, (2) $|E_t|<\frac{1}{4\eta}\|\vv_t\|.$, (3) $\|\vv_t\|>8 B$.
\ignore{
\begin{enumerate}
\item\label{it:1}  $|E_t|>16 B.$
\item\label{it:2} $|E_t|<\frac{1}{4\eta}\|\vv_t\|.$
\item\label{it:3} $\|\vv_t\|>8 B$
\end{enumerate}
}
We then show $\|\vv_{t+1}-\vv^*\|\le\|\hat{\vv}_{t+1}-\vv^*\|.$ 

In other words, we will show that if $\mathcal{E}^t_0$ happens and $\|\vv_{t+1}-\vv^*\|>\|\hat{\vv}_{t+1}-\vv^*\|,$ then either $\mathcal{E}^t_2$ or $\mathcal{E}^t_1$ happened.
This will conclude the proof.

Fix $t$, note that since $|\action{\xx}{\ww}|<1$ we have that $\|\fwx{\xx}\|<1$. We then have:
\bea
\|\hat{\vv}_{t+1}\|&=&\|\vv_t-\eta (E_t-y)\xxtp\| \\
&\ge& \|\vv_t\| -\eta|E_t|-\eta\ge \frac{3}{4}\|\vv_t\|-\eta  \nonumber
\eea
where the last inequality is due to assumption (2) above. We therefore have the following for every $\|\vv^*\|<B$:
\[ \|\hat{\vv}_{t+1} - \vv^*\|\ge \frac{3}{4}\|\vv_t\|-\eta-B\]

On the other hand, if $\vv_{t+1}\ne \hat{\vv}_{t+1}$ then by construction of the algorithm $\vv_{t+1} = \frac{1}{4}\vv_{t}$:
\[\|\vv_{t+1}-\vv^*\| \le \|\vv_{t+1}\|+\|\vv^*\|\le \frac{\|\vv_t\|}{4}+B.\]
Next note that $\eta<2B$ and assumption (3) states $\|\vv_t\|>8 B$. Therefore:
$\frac{1}{2}\|\vv_t\|> 4B>\eta+ 2B$, 
and:
\bean
\|\hat{\vv}_{t+1}-\vv^*\| &\ge& \frac{3}{4}\|\vv_t\|-\eta - B  \\
&=& \frac{1}{4}\|\vv_t\|+ \left(\frac{1}{2}\|\vv_t\| -\eta - 2B\right)+B \\
&\ge& \frac{1}{4} \|\vv_t\| +B\ge\|\vv_{t+1}-\vv^*\|
\eean
\end{proof}

Next we upper bound $\prob{\mathcal{E}_1^t\cup \mathcal{E}_2^t}$. In what follows the superscript $t$ is dropped. 
\newline
{\textbf{A bound for $\prob{\mathcal{E}_1\cap \mathcal{E}_2^c}$:}}
Assume that \[|E_t -\<\vv_t,\xxtp\>| < (\frac{1}{4\eta}-1)8 B.\]
We assume $T$ is sufficiently large and $\eta<\frac{1}{8}$. We have $\frac{1}{4\eta}-1 >1$. Since we assume $\mathcal{E}_2$ did not happen we must have $\|\vv_t\|>8 B$ and
$|E_t -\<\vv_t,\xxtp\>| < (\frac{1}{4\eta}-1)\|\vv\|$, and therefore:
\[E_t-\|\vv\| <|E_t -\<\vv_t,\xxtp\>|< (\frac{1}{4\eta}-1)\|\vv\|.\]
Which implies $E_t< \frac{1}{4\eta}\|\vv\|$, and we get that $\mathcal{E}_1$ did not happen. We conclude that if $\mathcal{E}_1$ and not $\mathcal{E}_2$ then:
\[|E_t -\<\vv_t,\xxtp\>| \ge (\frac{1}{4\eta}-1)8 B.\] Since $\frac{1}{4\eta}-1 > 1$ we have that:
$|E_t -\<\vv_t,\xxtp\>|\ge 8 B$, leading to:
\begin{equation}\label{eq:e1ne2} \prob{\mathcal{E}_1\cap \mathcal{E}_2^c}\le \prob{|E_t -\<\vv_t,\xxtp\>|\ge 8 B}.\end{equation}
\newline
{\textbf{A bound for $\prob{\mathcal{E}_2}$:}}
If $|E_t|>16B$ and $\|\vv_t\|<8B$ then by normalization of $\xxtp$ we have that $\left<\vv_t,\xxtp\right><8B$ and trivially we have that
$|E_t -\<\vv_t,\xxtp\>|\ge 8 B$, and therefore:
\begin{equation}\label{eq:e2} \prob{\mathcal{E}_2}\le \prob{|E_t -\<\vv_t,\xxtp\>|\ge 8 B}.\end{equation}
 Taking \eqref{eq:e1ne2} and \eqref{eq:e2} we have that
 \begin{equation} \prob{\mathcal{E}_2\cup \mathcal{E}_1}\le 2\prob{|E_t -\<\vv_t,\xxtp\>|\ge 8 B}.\end{equation}

   By \lemref{lem:estimation} we have that:
\[P\left(|E_t -\<\vv_t,\xxtp\>|\right) > 8 B)<\]\[
\exp(-\frac{m (8 B)^2}{((16B+1)\eta t)^2})<\exp\left(-\frac{ m}{ (3\eta t)^2}\right)
\]

Taking the above upper bounds together with \lemref{lem:ptbound} we can prove \lemref{thm:ptbound}.

\subsection{Proof of Lemma \ref{lemma:bound_var} \label{sec:proof_bound_var}}
\ignore{
We begin by deriving \corref{cor:standardTerms} that bounds the first two terms in the regret bound. As discussed, this section follows standard techniques.
We begin with an upper bound on $\mathbb{E}(\|\bar{\nabla}_t\|^2)$.
\begin{lemma} Consider the setting as in \lemref{thm:ptbound}. Then

\[\Var{\bar{\nabla}_t} \le \frac{((16 B+1)\eta t)^2}{m},\]
and,
\[\expect{\|{\nabla}_{t}\|^2} \le 2\expect{\ell_t(\vv_t)}.\]
\end{lemma}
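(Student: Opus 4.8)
The statement bundles two essentially unrelated facts, so I would prove them separately. Throughout, the one structural input I would lean on is that the feature map is a contraction: since $|\action{\ww}{\xx}|<1$ and $\mu$ is a probability measure, $\norm{\xxtp}^2=\int \action{\ww}{\xx_t}^2\,d\mu(\ww)\le 1$. Both inequalities then reduce to factoring the common outer vector $\xxtp$ out of a gradient and absorbing its norm into this $\le 1$ bound.

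For the gradient-norm bound (the easy half) I would simply insert the explicit gradient $\nabla_t=(\scalar{\vv_t}{\xxtp}-y_t)\xxtp$ and factor out the scalar, giving $\norm{\nabla_t}^2=(\scalar{\vv_t}{\xxtp}-y_t)^2\norm{\xxtp}^2\le(\scalar{\vv_t}{\xxtp}-y_t)^2=2\ell_t(\vv_t)$, where the last equality is just the definition of the loss in \eqref{eq:loss}. Taking expectations yields $\expect{\norm{\nabla_t}^2}\le 2\expect{\ell_t(\vv_t)}$. This is the standard self-bounding (smoothness) property of the squared loss and presents no difficulty.

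For the variance bound I would first isolate the randomness coming from the estimator. Since $\bar{\nabla}_t=(E_t-y_t)\xxtp$ and $\nabla_t=(\scalar{\vv_t}{\xxtp}-y_t)\xxtp$ share the same $y_t$ and the same outer factor $\xxtp$, their difference collapses to $\bar{\nabla}_t-\nabla_t=(E_t-\scalar{\vv_t}{\xxtp})\xxtp$, so that $\norm{\bar{\nabla}_t-\nabla_t}^2=(E_t-\scalar{\vv_t}{\xxtp})^2\norm{\xxtp}^2\le(E_t-\scalar{\vv_t}{\xxtp})^2$. Conditioning on the history through round $t$ (which fixes $\vv_t$, hence $\alphat$ and $\xx_t$), \lemref{lem:spsub} tells us $E_t$ is unbiased for $\scalar{\vv_t}{\xxtp}$, so the conditional expectation of the right-hand side is exactly the conditional variance of $E_t$. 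I would bound that variance by the usual averaging argument: writing $E_t=\frac1m\sum_{k=1}^m Z_k$ with $Z_k=\norm{\alphat}_1 E^{(k)}$ the IID summands of \algref{alg:spsub}, each obeys $|Z_k|\le\norm{\alphat}_1$, hence $\Var{Z_k}\le\expect{Z_k^2}\le\norm{\alphat}_1^2$ and $\Var{E_t}=\Var{Z_1}/m\le\norm{\alphat}_1^2/m$. Finally I would invoke \lemref{lem:estimation}, whose deterministic bound $\norm{\alphat}_1\le(16B+1)\eta t$ upgrades this to $((16B+1)\eta t)^2/m$; since the bound is uniform it survives the outer expectation over the history, giving $\Var{\bar{\nabla}_t}\le((16B+1)\eta t)^2/m$.

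The only step demanding care is the conditioning in the variance argument: one must pass to the conditional expectation given the past before claiming $E_t$ is unbiased and before treating $\norm{\alphat}_1$ as a fixed constant, and only afterwards integrate over the history. No new concentration inequality is needed beyond the elementary fact that $\Var{Z}\le\expect{Z^2}\le M^2$ whenever $|Z|\le M$, which is precisely the ingredient already exploited in \lemref{lem:spsub}.
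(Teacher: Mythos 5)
Your proposal is correct and follows essentially the same route as the paper: both halves reduce to $\norm{\xxtp}\le 1$, the variance bound collapses to $\Var{E_t}\le \norm{\alphat}_1^2/m$ via the IID bounded summands of \algref{alg:spsub}, and \lemref{lem:estimation} supplies the $\ell_1$ bound. Your explicit handling of the conditioning on the history is a bit more careful than the paper's write-up, but it is a refinement of the same argument rather than a different one.
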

\begin{proof}
}
Begin by noting that since $\|\fwx{\xx}\|<1$, it follows from the definitions of $\nabla,\bar{\nabla}$ that $\Var{\bar{\nabla}_{t}}=\expect{\|\bar{\nabla}_{t}-\nabla_{t}\|^2} $
and therefore 
\[
\Var{\bar{\nabla}_{t}} \le \expect{\left(E_t-\<\vv_t,\xxtp\>\right)^2}=\Var{E_t}
\]
By construction (see \algref{alg:spsub}) we have that:
\[\Var{E_t}= \frac{1}{m}\Var{\|\alpha^{(t)}\|_1^2\action{\xx_i}{\ww}{\action{\xx_t}{\ww}}}\]
where the index $i$ is sampled as in \algref{alg:spsub}, and $\action{\xx_i}{\ww}{\action{\xx_t}{\ww}}$ is bounded by $1$. By \lemref{lem:estimation} we have that 
\[\Var{E_t} \le \frac{((16B+1)\eta t)^2}{m}.\]
This provides the required bound on $\Var{\bar{\nabla}_{t}}$. Additionally, we have that
\[ \|\nabla_t\|^2= (\scalar{\vv_t}{\fwx{\xx_t}}-y_t)^2\|\fwx{\xx_t}\|^2  \le  2\ell_t(\vv_t) \]
and the result follows by taking expectation.

\ignore{
We thus have the following corollary that bounds the first three terms in \eqref{eq:main}:
\begin{corollary}\label{cor:standardTerms} With the notations and setting of \lemref{thm:ptbound} we have:
\bea
\frac{B^2}{\eta} + \eta\sum_{t=1}^T \expect{\|{\nabla}_{t}\|^2} +\eta\sum_{t=1}^T \Var{\bar{\nabla}_{t}} 
\le && 
 2\eta \expect{\sum_{t=1}^T \ell_t(\vv_t)-\ell_t(\vv^*)} +\frac{B^2}{\eta} \\
&& 
+2\eta \sum_{t=1}^T \ell_t(\vv^*)+  \eta \sum_{t=1}^T \frac{((16B+1)\eta T)^2}{m} \nonumber
\eea

\end{corollary}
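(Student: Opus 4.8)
The plan is to obtain the corollary as an immediate consequence of the preceding Lemma, which supplies the two pointwise estimates $\expect{\|\nabla_t\|^2} \le 2\expect{\ell_t(\vv_t)}$ and $\Var{\bar{\nabla}_t} \le \frac{((16B+1)\eta t)^2}{m}$. There is no real obstacle here: the content is purely an algebraic reorganization of the three terms on the left-hand side, and the whole point of having isolated the two bounds in the Lemma was to make this step mechanical. I would keep the term $\frac{B^2}{\eta}$ untouched, since it appears verbatim on both sides of the claimed inequality.

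Next I would substitute the gradient-norm bound into the middle term, giving $\eta\sum_{t=1}^T \expect{\|\nabla_t\|^2} \le 2\eta \sum_{t=1}^T \expect{\ell_t(\vv_t)}$. The one point that warrants a moment's care is the splitting $\expect{\ell_t(\vv_t)} = \expect{\ell_t(\vv_t) - \ell_t(\vv^*)} + \ell_t(\vv^*)$. This is valid because the comparator loss $\ell_t(\vv^*)$ is non-random: the fixed $\vv^*$ together with the observed pair $(\xx_t,y_t)$ fully determines $\frac{1}{2}\left(\scalar{\vv^*}{\xxtp}-y_t\right)^2$, so its expectation equals itself and can be pulled out of $\expect{\cdot}$. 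Summing over $t$ and multiplying by $2\eta$ then produces exactly the two comparator terms $2\eta\,\expect{\sum_{t=1}^T \ell_t(\vv_t) - \ell_t(\vv^*)}$ and $2\eta\sum_{t=1}^T \ell_t(\vv^*)$ on the right-hand side.

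Finally, for the variance term I would plug in $\Var{\bar{\nabla}_t} \le \frac{((16B+1)\eta t)^2}{m}$ and then bound each summand from above by replacing $t$ with $T$, using that $t\mapsto t^2$ is increasing together with $t\le T$; this yields $\eta\sum_{t=1}^T \Var{\bar{\nabla}_t} \le \eta\sum_{t=1}^T \frac{((16B+1)\eta T)^2}{m}$, which is the last term on the right. Collecting the untouched $\frac{B^2}{\eta}$, the two comparator terms, and this uniform variance bound reproduces the claimed inequality and completes the proof.
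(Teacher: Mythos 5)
Your proposal is correct and is essentially the paper's own derivation: the corollary is stated as an immediate consequence of the preceding lemma, obtained by substituting $\expect{\|\nabla_t\|^2}\le 2\expect{\ell_t(\vv_t)}$, splitting off the comparator via $\expect{\ell_t(\vv_t)}=\expect{\ell_t(\vv_t)-\ell_t(\vv^*)}+\ell_t(\vv^*)$, and bounding the variance terms uniformly using $t\le T$. Your added remark that $\ell_t(\vv^*)$ is deterministic (so it can be pulled outside the expectation) is the one point the paper leaves implicit, and it is handled correctly.
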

}
\ignore{
\subsection{Proof of \thmref{thm:mainmain}}
Recall that Lemma \ref{lem:core} and Corollary \ref{cor:standardTerms} assume an upper bound $B_T$ on $\norm{f_t}$. We begin by noting that $B_T$ can be bounded as follows, using \lemref{lem:estimation}:
\be
B_T = \max_{t} \|\vv_t\| \le \max_t \|\alpha^{(t)}\|_1\le (16 B+1 ) \eta T. 
\ee

Plugging \corref{cor:standardTerms} into \eqref{eq:main} we obtain:
 \be\label{eq:almost}
(1-2\eta)\expect{\sum_{t=1}^T \ell_t(\vv_t)-\ell_t(\vv^*)}
\le \frac{B^2}{\eta}  + 2\eta \sum_{t=1}^T \ell_t(\vv^*) +\eta\sum_{t=1}^T \frac{ ((16B+1)\eta T)^2}{m}+ \frac{(B_T+B)^2}{\eta} \sum_{t=1}^TP_t(\vv^*)
\ee
To bound the second term we note that:
 \begin{equation}\label{eq:ltbound}\min_{\|\vv^*\|<B} \sum_{t=1}^T \ell_t(\vv^*)\le \sum_{t=1}^T \ell_t(0)\le T.\end{equation}
 We next set $\eta$ and $m$ as in the statement of the theorem. Namely: $\eta =\frac{B}{2\sqrt{T}}$, and $m=((16B+1) B)^2T\log \gamma$, where  $\gamma = \max\left(\frac{((16B+1)\eta T+B)^2)}{\eta^2},e\right)$.
 
Our choice of $m$ implies that $m>((16B+1)\eta T)^2$, and hence the third term in \eqref{eq:almost} is bounded as follows:
\begin{equation}\label{eq:mbound}\eta \sum_{t=1}^T \frac{((16B+1)\eta T)^2}{m} \le \eta T\end{equation}

Next we have that $m> (3\eta t)^2 \log \gamma$ for every $t$, and by the bound on $B_T$ we have that $\gamma> \frac{(B+B_T)^2}{\eta^2}$. Taken together with \lemref{thm:ptbound} we have that:
\begin{equation}\label{eq:ptogd} \frac{(B_T+B)^2}{\eta} \sum_{t=1}^T P_t(\vv^*)\le \eta T.\end{equation}


\[(1-2\eta)\expect{\sum_{t=1}^T \ell_t(\vv_t)-\ell_t(\vv^*)}\le \frac{B^2}{\eta}  + 2\eta T +\eta T+\eta T\]

Finally by choice of $\eta$, and dividing both sides by $(1-2\eta)$ we obtain the desired result.

It remains to show that we can estimate each $\vv_t$ in the desired complexity (the result for the averaged $\vv$ is the same). Each $\vv_t$ has the form $\vv_t= \sum_{t=1}^T \alpha_i^{(t)}\xx_i$, 
By \lemref{lem:spsub} and \lemref{lem:estimation}, running \algref{alg:spsub}  $m$ iterations will lead to a random variable $E$ such that:
\[\prob{|E-\left<\vv_t,\xx\right>|}\le \exp\left(-\frac{\epsilon^2 m}{((16B+1)B\sqrt{T})^2}\right).\] We obtain that order of $O(\frac{B^4 T}{\epsilon^2}\log 1/\delta)$ estimations are enough.
}

\textbf{Acknowledgements}
The authors would like to thank Tomer Koren for helpful discussions.
Roi Livni was supported by funding from Eric and Wendy Schmidt Fund for Strategic Innovation.  This work was supported by the Blavatnik Computer Science Research Fund, the Intel Collaborative Research Institute for Computational Intelligence (ICRI-CI), and an ISF Centers of Excellence grant.
\small
\bibliography{randomfeat}
\bibliographystyle{icml2017}
\normalsize
\end{document}